\newcommand{\confx}{\ensuremath{h}}
\newcommand{\confy}{\ensuremath{H}}
\DeclareMathOperator*{\argmin}{arg\,min}
\newcommand{\oneoneea}{$(1+1)~\text{EA}$\xspace}
\newcommand{\oneoneais}{$(1+1)~\text{IA}^{hyp}$ } 
\newcommand{\oneoneeaageing}{$(1+1)~\text{EA}^{ageing}$ }
\newcommand{\muoneeaageing}{$(\mu+1)~\text{EA}^{ageing}$ }
\newcommand{\partition}{\textsc{Partition}\xspace}
\newcommand{\prob}[1]{\text{Pr}\left\{#1\right\}}
\newcommand{\X}{\ensuremath{{\mathcal{X}}}}
\newcommand{\WittGeneralisedInstance}{$G^*_{\epsilon}$\xspace}
\newtheorem{theorem}{Theorem}
\newtheorem{corollary}{Corollary}
\newtheorem{property}{Property}
\newtheorem{lemma}{Lemma}
\newtheorem{definition}{Definition}
\journal{Journal of \LaTeX\ Templates}
\begin{document}

\begin{frontmatter}

\title{Artificial Immune Systems Can Find Arbitrarily Good Approximations for the NP-Hard Number Partitioning Problem}
\tnotetext[mytitlenote]{ {\color{black}An extended abstract of this paper without proofs was presented at PPSN 2018~\cite{CorusOlivetoYazdani2018}.}}

%% Group authors per affiliation:
\author{Dogan Corus}
\ead{d.corus@sheffield.ac.uk}
\author{Pietro S. Oliveto}
\ead{p.oliveto@sheffield.ac.uk}
\author{Donya Yazdani}
\ead{dyazdani1@sheffield.ac.uk}
\address{Rigorous Research, Department of Computer Science, University of Sheffield}
\address{Sheffield, UK}
\address{S1 4DP}

\begin{abstract}
Typical artificial immune system (AIS) operators such as hypermutations with mutation potential and ageing allow 
to efficiently overcome local optima from which evolutionary algorithms (EAs) struggle to escape.
Such behaviour has been shown for artificial example functions constructed especially  to show difficulties that EAs may encounter during the optimisation process.
{\color{black}However, no evidence is available indicating that these two operators have similar behaviour also in more realistic problems.}
In this paper we perform an analysis for the standard NP-hard \partition problem from combinatorial optimisation and rigorously show that
hypermutations and ageing allow AISs to efficiently escape from local optima where standard EAs require exponential time. As a result we prove that while EAs and random local search (RLS) may get trapped on 4/3 approximations, 
AISs find arbitrarily good approximate solutions of ratio (1+$\epsilon$) 
{\color{black}within $n(\epsilon ^{-(2/\epsilon)-1})(1-\epsilon)^{-2} e^{3} 
2^{2/\epsilon} + 2n^3 2^{2/\epsilon} + 2n^3$ function evaluations in 
expectation. This expectation is polynomial 
in the problem size and exponential only in $1/\epsilon$}.

\end{abstract}

%\begin{keyword}
%Artificial immune systems, NP-hard \partition problem, Combinatorial optimisation 
%\end{keyword}

\end{frontmatter}

%\linenumbers

%
% The code below should be generated by the tool at
% http://dl.acm.org/ccs.cfm
% Please copy and paste the code instead of the example below. 
%

%\ccsdesc[500]{Computer systems organization~Embedded systems}
%\ccsdesc[300]{Computer systems organization~Redundancy}
%\ccsdesc{Computer systems organization~Robotics}
%\ccsdesc[100]{Networks~Network reliability}

%\keywords{ACM proceedings, \LaTeX, text tagging}

\section{Introduction}

Artificial immune systems (AISs) take inspiration from the immune system of vertebrates to solve complex computational problems.
Given the role of the natural immune system to recognise and protect the 
organism from viruses and bacteria, natural applications of AISs have been 
pattern recognition, %\cite{PatternRecogApp}, 
computer security, virus detection and anomaly detection \cite{ComputerSecurityApp,VirusDetectApp,AnamolyDetectApp}. Various AISs, inspired by Burnet's clonal selection principle \cite{Burnet1959}, have been devised for solving optimisation problems.
Amongst these, the most popular are Clonalg~\cite{decastro}, the B-Cell algorithm~\cite{kelsey} and Opt-IA~\cite{CutelloTEVC}. 
 
AISs for optimisation are very similar to evolutionary algorithms (EAs) since 
they essentially use the same Darwinian evolutionary principles 
to evolve populations of solutions 
(here called antibodies). In particular, they use the same natural selection principles to gradually evolve high quality solutions.
The main distinguishing feature of AISs to more classical EAs is their use of variation operators that typically have higher mutation rates
compared to the standard bit mutations (SBM) of EAs where each bit is mutated independently with {\color{black}(typically small)} probability $p$. Examples are the contiguous somatic mutations (CSM) of the B-Cell algorithm and the hypermutations with mutation potential of Opt-IA.
Another distinguishing feature is their use of ageing operators that remove old solutions which have spent a long time without improving (local optima).
{\color{black} However, it is still largely unclear} on what problems an AIS will have better performance to that of EAs.
Also very little guidance is available on when a class of AISs should be applied rather than another.
{\color{black}Amongst the few available results, it has been proven that there exist instance classes of both the longest common subsequence \cite{JansenZarges2012} and the NP-hard vertex cover~\cite{JansenOlivetoZarges2011} problems which are difficult for EAs equipped with SBM and crossover, i.e., they require exponential expected time to find the global optimum, while the B-Cell algorithm locates it efficiently.}
The superior performance is due to the ability of the contiguous somatic mutations of the B-Cell algorithm to efficiently escape the local optima of these instances while SBM require
exponential expected time in the size of the instance. 

Apart from these results, the theoretical understanding of AISs relies on analyses of their behaviour for artificially constructed toy problems.
Recently it has been shown how both the hypermutations with mutation potential and the ageing operator of Opt-IA can lead to considerable speed-ups compared to the performance 
of well-studied EAs using SBM for standard benchmark functions used in the evolutionary computation community such as \textsc{Jump}, \textsc{Cliff} or \textsc{Trap}~\cite{CorusOlivetoYazdani2017}.
While the performance of hypermutation operators to escape the local optima of these functions is comparable to that of the EAs with high mutation rates that have been increasingly gaining popularity since 2009~\cite{DoerrGecco2017,OlivetoLehreNeumann2009,CorusOlivetoTEVC,JumpTEVC2017,DoerrDoerrEbel2015}, ageing allows the optimisation of hard instances of \textsc{Cliff} in $O(n \log n)$ where $n$ is the problem size. Such 
a runtime is required by all unbiased unary (mutation-based) randomised search heuristics to optimise any function with unique optimum~\cite{LehrWitt2012}. Hence, while the expected runtime for SBM algorithms is exponential for these instances of \textsc{Cliff}, the \muoneeaageing is asymptotically as fast as possible. Interestingly, a similar result holds also for the CSM of B-Cell: SBM requires exponential time to optimise the easiest function for CSM (i.e., \textsc{MinBlocks} )\cite{EasiestFunctions}.
% , i.e., 
% SBM and local search require $\Omega(n \log n)$ function evaluations to optimise their easiest function with unique 
% optimum {\color{black}--} \textsc{OneMax}~\cite{DrosteJansenWegener2002}.
%
Although some of these speed-ups over standard EA performance are particularly impressive, no similar evidence of superior performance of these two operators is available for
more realistic problems. 
{\color {black}
In this paper we perform an analysis for \textsc{Partition}, also known as Number Partitioning, which is considered one of the six basic NP-complete problems~\cite{GareyJohnson1979}. %and is proved to be NP-hard~\cite{Karp1972,BrunoCoffmanSethi1974}. 
%Some of its applications include multiprocessor scheduling \cite{CoffmanLueker1991} and public key cryptography \cite{MerkleHellman1978}.}
\textsc{Partition} as a decision making and an optimisation process arises in many resource allocation tasks in manufacturing, production and information processing systems~\cite{Pinedo2016, Hayes2002}. 

We are {\color{black}particularly} concerned with comparing the performance of AISs with other general purpose algorithms for the \textsc{Partition} problem. Regarding such algorithms, the performance of  RLS and the (1+1)~EA is well understood in the literature~\cite{Witt2005,NeumannWitt2015}. 
%Their analysis, which is based on a simulation of Graham's PTAS~\cite{Graham1969}, 
It has been shown that RLS and EAs using SBM may get stuck on local optima which lead to a worst case approximation ratio of 4/3. In order to achieve a (1+$\epsilon$) approximation for arbitrary $\epsilon$, a clever restart strategy has to be put in place.}
Herein, we first show the power of hypermutations and ageing  by proving that each of them solve to optimality
instances that are hard for RLS and SBM EAs, by efficiently escaping local optima for which the EAs struggle. Afterwards we prove that AISs using hypermutations 
with mutation potential guarantee arbitrarily good solutions of approximation 
ratio $(1+\epsilon)$ for any $\epsilon=\omega(n^{-1/2})$ in expected $n(\epsilon ^{-(2/\epsilon)-1})(1-\epsilon)^{-2} e^{3} 2^{2/\epsilon} + 2n^3 2^{2/\epsilon} + 2n^3$ fitness function evaluations, which reduces to $O(n^{3})$ for any constant $\epsilon$
without requiring any restarts. On the other hand, we prove that an AIS with SBM 
and ageing can efficiently achieve the  same approximation ratio up to $\epsilon \geq 4/n$ in $O(n^2)$ expected fitness function evaluations, 
automatically restarting the optimisation process by implicitly detecting when 
it is stuck on a local optimum. In contrast, if an appropriate restart strategy is set up, RLS and the \oneoneea achieve the same approximation ratio (up to $\epsilon=\Omega(1/n)$) in ${O(n \ln(1/\epsilon))\cdot2^{(e \log{e}+e)\lceil 2/\epsilon \rceil \ln(4/\epsilon)+O(1/\epsilon)}}$ expected function evaluations. 
This is only possible if  the restart strategy consists of  $2^{\Omega((1/\epsilon) \ln (1/\epsilon))}$ expected restarts of  $O(n \log (1/\epsilon))$ evaluations each. Hence, it is only possible if the desired approximation ratio is decided in advance. On the other hand, none of this information is required for either of the two AISs to efficiently identify the same approximation ratios.
To the best of our knowledge this is the first time {\color{black}that} either 
hypermutations or ageing have been theoretically analysed for a standard problem 
from combinatorial optimisation and the first time performance guarantees of any 
AIS are proven in such a setting.
% The rest of the paper is structured as follows.
% In the Preliminaries we introduce the \textsc{Partition} problem, the AIS operators and the simple AIS algorithms considered in the paper.
% In Section \ref{sec:generalised} we show how hypermutations and ageing each easily solve to optimality the instance class where EAs and RLS get stuck on an expected 4/3
% approximation ratio. To this end we generalise the class to include a much larger set of instances where AIS require polynomial time versus the exponential expected runtime of SBM or RLS.
% %In Section \ref{sec:uniqueoptimum} we introduce a more complicated multimodal instance class where the number of local optima is a parameter of the class.
% %We prove that while hypermutations with mutation potential are efficient for the instance class, SBM and RLS are inefficient.
% In Section \ref{sec:approx} we prove that hypermutations and ageing guarantee the (1+$\epsilon$) expected approximation ratio without requiring any predefined restart strategy.
% We conclude the paper with a  final discussion of the results.
%Some proofs are in an Appendix for the convenience of the reviewers.

\section{Preliminaries}
\begin{algorithm}[t]
    \caption{(1+1)~IA$^{hyp}$~\cite{CorusOlivetoYazdani2017} for minimisation}
    \label{alg:hyp}
    \begin{algorithmic}[1] % The number tells where the line numbering should start
        
            \State Set each bit in $x$ to 1 with probability $1/2$ and to 0 otherwise.
            \State Evaluate $f(x)$.      
            \While{termination condition not satisfied} %\Comment{}
            \State  {$y := x$;}
            \State {$F:=\{1,\dots,n\}$;}
            \While{$F \neq \emptyset$ and $f(y) \geq f(x)$}
                \State $i :=$ Sample $F$ {\color{black}uniformly at 
random};
				\State {$F:=F \setminus \{i\}$;}
				\State {Flip $y_i$;}
				\State {Evaluate $f(y)$;}
           \EndWhile     
                \If {$f(y)\leq f(x)$} 
                \State {$x:=y$;}
                \EndIf
            \EndWhile
    \end{algorithmic}
\end{algorithm}

\subsection{Artificial Immune Systems and Evolutionary Algorithms}

%We aim to analyse the performance of two typical AIS operators for the NP-Hard \partition problem.
%To this end we will incorporate them into minimal algorithmic frameworks.
{\color{black}AISs designed for optimisation are inspired by the optimisation processes in the adaptive immune system of vertebrates. The  clonal selection principle is a central paradigm of the adaptive immune system describing how antibodies are produced in response to antigens.
Proposed by Burnet in the 1950s~\cite{Burnet1959}, the clonal selection principle states that antibodies are produced by b-cells in each host only in response to a specific antigen to which the host is exposed to. When a b-cell binds to an antigen, it becomes activated and it generates many identical b-cells called clones. The next step in the clonal selection is a process called \textit{affinity maturation}. The aim of this process is to increase the affinity of the antibodies with that specific antigen. This is achieved by applying somatic hypermutation (mutations with high rates) to the b-cells. The b-cells with greater affinity with the antigen survive and are used in the next generation of the process~\cite{Janeway2011}.
 {\it Hypermutation with mutation potential} operators are inspired by such high mutation rates~\cite{CutelloTEVC}.}
 For the purpose of optimisation, these high mutation rates may allow the algorithm to escape local optima by identifying promising search areas far away from the current ones. 
%The role of the mutation potential $M$, which is typically linear in the problem size (i.e., $M=cn$), varies in different variants of the operator. 
%In {\it inversely proportional} hypermutations\cite{JansenZarges2011} and in {\it static} hypermutations it indicates an upper bound on the number of bits that will be flipped in one mutation operation. In {\it proportional} hypermutations a higher number of bits, proportional to the fitness, may flip~\cite{CutelloTEVC}.

AISs for optimisation are applied in the same way as other well-known randomised search heuristics. In essence, all that is required is a fitness function to evaluate how good solutions are and some way to represent candidate solutions to the problem (i.e., representation). The main difference between AISs and other heuristic search algorithms is the metaphor from which they are inspired and consequent variation and diversity operators. In this paper we will use bit-strings  (i.e., $x \in\{0,1\}^n$) to represent  candidate solutions and we will analyse the static hypermutation operator considered in~\cite{CorusOlivetoYazdani2017}
for benchmark functions, where the maximum number of bits to be flipped is fixed 
to {\color{black}$cn$ for some constant $c\in (0,1]$} throughout the 
optimisation process.
Two variants of static hypermutations have been proposed in the literature. A straightforward version, where in each mutation exactly $cn$ bits are flipped, and another one 
called {\it stop at first constructive mutation} (FCM)
where the solution quality is evaluated after each of the $cn$ bit-flips and 
the operator is halted once a {\it constructive mutation} occurs.
Since Corus et al.~\cite{CorusOlivetoYazdani2017} proved that the straightforward version requires exponential time to optimise any function with a polynomial number of optima, we will consider the version with FCM.
We define a mutation to be {\it constructive} if the solution is strictly better than the original parent  and we set $c=1$
such that all bits will flip if no constructive mutation is found before.
For the sake of understanding the potentiality of the operator, we embed it into a minimal AIS framework that uses only one antibody (or individual) and creates a new one in each iteration
via hypermutation as done previously in the literature~\cite{CorusOlivetoYazdani2017} (the algorithm is essentially a (1+1)~EA~\cite{DrosteJansenWegener2002} that uses hypermutations instead of SBM).
The simple AIS for the minimisation of objective functions, called \oneoneais for consistency with the evolutionary computation literature, is formally described in Algorithm~\ref{alg:hyp}.

\begin{algorithm}[h!]
    \caption{\muoneeaageing \cite{OlivetoSudholt2014} for minimisation}
    \label{alg:ageing}
    \begin{algorithmic}[1] % The number tells where the line numbering should start 
            \State Create population $P:=\{x_1, \cdots, x_{\mu}\}$ with each bit in $x_i$ set to 1 with probability $1/2$ and to 0 otherwise;
            \State For all $x \in P$ evaluate $f(x)$ and set 
{\color{black}$x^{age}:=0$}. 
            \While{termination condition not satisfied} %\Comment{}
            	 \For {all $x \in P$}
            	 \State Set $x^{age}:=x^{age}+1$;
            	 \EndFor
            	\State Select $x \in P$ uniformly at random;
            	\State $y:=x$;
            	\State{Flip each bit in $y$ with probability $1/n$;} 
            	\State{Add $y$ to $P$;}                   
                \If{$f(y)< f(x)$}
                \State $y^{age}:=0$;
                \Else
                \State $y^{age}:=x^{age};$
				\EndIf
           		\For {all $x \in P$}
           		\If {$x^{age} \geq \tau$} 
				\State{Remove $x$ from $P$;} 
				\EndIf
				\EndFor
				\If {$|P| > \mu$}
				\State{Remove the individual from $P$ with the highest $f(x)$};
				\EndIf
				\If {$|P|< \mu $}
				\State{Add enough randomly created individuals {\color{black}with age 0} to $P$ until $|P|=\mu$};
\EndIf
            \EndWhile
    \end{algorithmic}
\end{algorithm}

\begin{algorithm}[h!]
    \caption{(1+1)~EA~\cite{DrosteJansenWegener2002} for minimisation}
    \label{alg:oneea}
    \begin{algorithmic}[1] % The number tells where the line numbering should start 
            \State Set each bit in $x$ to 1 with probability $1/2$ and to 0 otherwise; 					\State{Evaluate $f(x)$;}   
            \While{the termination condition is not satisfied} %\Comment{}
                \State Create $y$ by flipping each bit of $x$ with probability $1/n$;
                \If {$f(y)\leq f(x)$}
                \State{ $x:=y$;}
                \EndIf
            \EndWhile
    \end{algorithmic}
\end{algorithm}

Another popular operator used in AISs is {\it ageing}.
The idea behind this operator is to remove antibodies which have not improved for a long time.
Intuitively, these antibodies are not improving because they are trapped on some local optimum and they may be obstructing the algorithm from progressing in more promising 
areas of the search space (i.e., the population of antibodies may quickly be taken over by a high quality antibody on a local optimum).
The antibodies that have been removed by the ageing operator are replaced by new ones initialised at random. 

%Three different ageing mechanisms have been proposed in the literature. 
%{\it Static} ageing, where antibodies are removed from the population if they have not improved for $\tau$ generations, 
%{\it Stochastic} ageing, where they are removed in each generation with some probability $p_{die}$ and {\it Hybrid} ageing, where individuals are removed
%with probability $p_{die}$ only once they have failed to improve for at least $\tau$ generations.

Ageing operators have been proven to be very effective at automatically restarting the AIS, without having to set up a restart strategy in advance, once it has converged on a local optimum ~\cite{JansenZarges2011c}.  Stochastic versions have been shown to also allow antibodies to escape from local optima~\cite{OlivetoSudholt2014,CorusOlivetoYazdani2017}.
As in previous analyses we incorporate the ageing operator in a simple population-based evolutionary algorithmic framework, the $(\mu+1)$~EA~\cite{Witt2006}, and for simplicity consider the static variant
where antibodies are removed from the population with probability 1 if they have not improved for $\tau$ generations.
The \muoneeaageing is formally defined in Algorithm \ref{alg:ageing}.
%To the best of our knowledge this is the first time either hypermutations or ageing operators have been theoretically analysed for a standard combinatorial optimisation problem.
%
We will compare the performance of the AISs with the standard \oneoneea and RLS for which the performance for \partition is known.
The former uses standard bit mutation, i.e., it flips each bit of the parent independently with probability $1/n$ in each iteration, while the latter flips exactly one bit.
The \oneoneea for minimisation is formally described in Algorithm~\ref{alg:oneea}.

{\color{black}%While most of the available time complexity analysis are related to toy problems with significant structures (see {\color{red}\cite{}} for surveys of results), 
The 
performance of simple EAs is known for many combinatorial optimisation problems both 
in P and in NP-hard. Recent work shows that EAs provide optimal solutions in expected FPT (fixed parameter tractable) time for 
the NP-hard generalised minimum spanning tree problem~\cite{corus2016}, the Euclidean 
and the generalised traveling salesperson problems~\cite{sutton2014, corus2016} and 
approximation results for various NP-hard problems on scale-free networks~\cite{chauhan2017}. EAs have been shown to be efficient also on problems in P such as the 
all-pairs shortest path problem~\cite{DOERR201312} and some easy instance classes of 
the k-CNF~\cite{doerr2017time} and the knapsack~\cite{neumann2018runtime} 
problems. Although EAs are efficient general purpose solvers, super-polynomial 
lower bounds on a selection of easy problems
have also recently been made available~\cite{sutton2016superpolynomial}. {\color{black} Moreover, rigorous analyses of more realistic standard EAs using populations and crossover are possible nowadays \cite{CorusOlivetoTEVC,JumpTEVC2017,LengelerPPSN2018,LehrTEVC2018}. In particular, the efficient performance of a standard steady-state ($\mu$+1)~GA as an FPT algorithm for the closest substring problem has recently been shown~\cite{Sutton2018}}. We refer the reader to~\cite{NeumannWitt2010} for a comprehensive overview of older results in the field for combinatorial optimisation problems.}
% ,

%{\color{black}
%Both the presentation of our results and our analysis will frequently use Landau notation which denotes the asymptotic order of two functions, e.g., $f(x)=O(g(x))$ implies that $f(x)$ grows at most as fast as $g(x)$ when $x$ goes to infinity. We refer the reader to~\cite{LandauRef} for the precise definitions of the $O$, $o$, $\Omega$, $\omega$ and $\Theta$ notations. 
%}
Both the presentation of our results and our analysis will frequently make use of the following asymptotic notations: $O$, $o$, $\Omega$, $\omega$ and $\Theta$. We use the $O$-notation to show an asymptotic upper bound for a non-negative function $f(n)$ when $n$ goes to infinity. Hence, we say $f(n)$ belongs to the set of functions $O(g(n))$ if there exist positive constants $c$ and $n_0$ such that $0 \leq f(n) \leq cg(n)$ for all $n\geq n_0$. In other words, by using either $f(n) \in O(g(n))$, or equivalently  $f(n) = O(g(n))$,  we asymptotically bound function $f(n)$ from above by $g(n)$ up to a constant factor.
We refer the reader to~\cite{LandauRef} for the definitions of the other notations.

\subsection{Makespan Scheduling and \partition}

{\color{black}The optimisation version of the \partition problem is formally 
defined as follows:
\begin{definition}
Given positive integers $a_1, a_2,\ldots, a_n$, what is 
$\argmin\limits_{B \subseteq [n]} \max{\left(\sum\limits_{i \in 
B}a_i,\sum\limits_{i \in [n]\setminus B }a_i\right)}$?  
\end{definition}
% \begin{definition}
% Given positive processing times $p_1, p_2, \ldots, p_n$ and $m \in \mathbb{N}$. 
% For $i\in [m]$, what are $S_i \subseteq [n]$ 
% which minimises $\max\limits_{i \in [m]}{\left(\sum\limits_{j\in S_i}p_i\right)}$ such that $\bigcup\limits_{i\in[m]} S_i= [n]$?
% \end{definition}

Given $n$ jobs with positive processing times $p_1\geq p_2 \geq \dots \geq p_{n-1} \geq p_n$ and the number of machines $m$, the makespan scheduling problem with $m$ parallel machines is that of scheduling the $n$ jobs on $m$ identical machines in a way that the overall completion time (i.e., the {\it makespan}) is minimised~\cite{Pinedo2016}. Assigning the jobs to 
the machines is equivalent to dividing the set of processing times into $m$
disjoint subsets such that the largest subset sum yields the makespan. When $m=2$ and the 
processing times $p_i$ are scaled up to be integers $p^{*}_{i}$,  the problem is 
equivalent to the \partition problem with the set of integers $a_i=p_{i}^*$. }
This simple to define scheduling problem is well studied in theoretical 
computer science and is known to be NP-hard~\cite{Karp1972,BrunoCoffmanSethi1974}. 
Hence, it cannot be expected
that any algorithm {\color{black}finds optimal solutions} to all instances in 
polynomial time. 

\subsubsection{Problem-specific Algorithms}
{\color{black}The \partition problem is considered an easy NP-hard problem because
% To obtain optimal solutions of \partition problem dynamic programming are usually used. 
there are some dynamic programming algorithms that provide optimal solutions %of the \partition problem  
in pseudo-polynomial time of $O(n \sum{p_i})$~\cite{Pinedo2016}. The general idea behind such algorithms is to create an $n \times \sum{p_i}$ table where each pair $(i,j)$ indicates if there exists  a subset of $i$ that sums up to $j$. Then, by simple backtracking the optimal subset will be found. Clearly, such algorithms would not be practical for large sizes of $n \times \sum{p_i}$.

As the NP-hardness of \partition suggests, not all instances can be optimised in polynomial time unless P$=$NP. Hence, approximation algorithms are used to obtain approximate solutions. For a minimisation problem, an algorithm A which runs in polynomial time with respect to its input size is called a {\it $(1+\epsilon)$ approximation algorithm} if it can guarantee solutions with quality $(1+\epsilon) \cdot f(opt)$ for some $\epsilon>0$ where $f(opt)$ is the quality of an optimal solution. Such algorithm is also called a {\it polynomial approximation scheme} (PTAS) if it can provide a $(1+\epsilon)$ solution for any arbitrarily small $\epsilon>0$. If the runtime is also polynomial in $1/\epsilon$, then Algorithm A is called {\it fully polynomial approximation scheme} (FPTAS).

%However, there exist efficient problem specific algorithms which guarantee 
%solutions with approximation ratio (1+$\epsilon$) {\color{black}(\emph{i.e.}, 
%with an objective function value at most a (1+$\epsilon$) factor larger than the 
%optimal)} in time $O(n^3/\epsilon)$, in classical complexity measures,  which is 
%polynomial both in $n$ and $1/\epsilon$~\cite{Hochbaum1997}.
Since the \partition problem is a special case of makespan scheduling with $m=2$, any approximation algorithm for the latter problem may be applied to \partition. 
The first approximation algorithm for makespan scheduling was presented by Graham in 1969~\cite{Graham1969}. He showed a worst-case approximation ratio of $2-1/m$ where $m$ is the number of machines (this ratio is $3/2$ for \partition). He then proposed a simple heuristic which guarantees a better approximation ratio of $4/3-1/(3m)$ (which is $7/6$ for \partition) with running time of $O(n \log n)$. This greedy scheme, known as {\it longest processing time} (LPT), assigns the job with longest processing time to the emptier machine each time. He then improved the approximation ratio to $1+ \frac{1-1/m}{1+\lceil{k/m}\rceil}$ by using a more costly heuristic. This PTAS first assigns the $k$ large jobs in an optimal way, then distributes the small jobs one by one in any arbitrary order to the emptiest machine. The ideas behind these heuristics are widely used in the later works, including a heuristic proposed by Hochbaum and Shmoys which guarantees a $(1+\epsilon)$ approximation in $O((n/\epsilon)^{1/\epsilon^2})$ for makespan scheduling on an arbitrary number of machines~\cite{HochbaumShmoys1987}. This heuristic first finds an optimal way of assigning large jobs and then distributes the small jobs using LPT. This is roughly the best runtime achievable for makespan scheduling on an arbitrary number of machines \cite{HochbaumShmoys1987}.
 However, for limited number of machines better results are available in the literature.
 For $m=2$ (i.e., \partition),  Ibarra and Kim have shown an FPTAS which runs in $O(n/\epsilon^2)$ and shortly after Sahni showed an FPTAS based on dynamic programming which runs in $O(n^2/\epsilon)$. This runtime is $O(n\cdot(n^2/\epsilon)^{m-1})$ for larger number of machines $m>3$~\cite{Sahni1976}. 
%In the case of two machines, Ibarra and Kim have shown an FPTAS which runs in $O(n/\epsilon^3)$ in 1975~\cite{IbarraKim1975} and later on in 1976, Sahni showed another FPTAS for \partition  which runs in $O(n^2/\epsilon)$~\cite{Sahni1976}. 

%[where to put it?!] Among the most well-known heuristics is the Karmarkar-Karp algorithm or {\it largest differencing method} (LDM)~\cite{KarmarkarKarp1982}, which is shown to have a worst case approximation ratio of $7/6$ for two machines in time $O(n \log n)$~\cite{Fischetti1Martello987}. This algorithm takes two largest jobs from the sorted list of jobs at each stage and replaces them with the difference of their processing times. The processing time of the final remaining job is the discrepanc1y of two machines. By a simple backtracking the partitions will be found. In 1998, Korf proposed a branch-and-bound algorithm using LDM
}

\subsubsection{General Purpose Heuristics}

% [intro to what is a general purpose heuristic?]
For the application of randomised search heuristics a solution may easily be 
represented with a bit string $x\in\{0,1\}^n$ where each bit $i$ represents job 
$i$ and if the bit is set to 0 it is assigned to the first machine ($M_1$) and 
otherwise to the second machine ($M_2$). Hence, the goal is to minimise 
{\color{black} the makespan} $f(x) 
:= \max \big\{\sum_{i=1}^n p_ix_i, \sum_{i=1}^n p_i (1 -x_i)\big\}$, 
i.e., the processing time of the last machine to terminate. 

Both RLS and the \oneoneea have roughly 4/3 worst case expected approximation ratios for 
the problem (i.e., there exist instances where they can get stuck on solutions by 
a factor of $4/3-\epsilon$ worse than the optimal one)~\cite{Witt2005}. 
However if an appropriate restart strategy is setup in advance,
both algorithms may be turned into polynomial randomised approximation schemes (PRAS, i.e., algorithms that compute a (1+$\epsilon$) approximation in polynomial time in the problem size with probability at least $3/4$) with a runtime bounded by ${O(n \ln(1/\epsilon))\cdot2^{(e \log{e}+e)\lceil 2/\epsilon \rceil \ln(4/\epsilon)+O(1/\epsilon)}}$
 ~\cite{Witt2005}.
Hence, as long as $\epsilon$ does not depend on the problem size, the algorithms can achieve arbitrarily good approximations with an appropriate restart strategy. The analysis of the \oneoneea and RLS essentially {\color{black} shows that the algorithms with a proper restart strategy simulate Graham's PTAS.}

In this paper we will show that AISs can achieve stronger results. Firstly, we 
will show that both ageing and hypermutations can efficiently solve to 
optimality the worst-case instances for RLS and the \oneoneea. More importantly, 
we will prove that ageing automatically achieves the necessary restart strategy 
to guarantee the (1+$\epsilon$) approximation while hypermutations
guarantee it in a single run in polynomial expected runtime and {\color{black}with 
overwhelming probability}\footnote{We say that events occur ``with overwhelming probability'' (w.o.p.) if they occur with probability at least $1 - o(1/poly(n))$, where $o(1/poly(n))$ denotes the set of functions asymptotically smaller than the reciprocal of any polynomial function of $n$. Similarly, ``overwhelmingly small probability'' denotes any probability in the order of $o(1/poly(n))$.}.

The generalisation of the worst-case instance follows the construction in \cite{Witt2005} closely while adding an extra parameter to the problem in order to obtain a class of functions rather than a single instance for every possible value of $n$.
For the approximation results (especially for the hypermutation), we had to diverge significantly from Witt’s analysis, which relies on a constant upper bound on the probability that the assignment of a constant number of specific jobs to the machines will not  be changed for  at least a linear number of iterations. The hypermutation operator makes very large changes to the solution at every iteration and, unlike the standard bit mutation operator, we cannot rely on successive solutions resembling each other in terms of their bit-string representations. Thus, we have worked with the range of the objective function values. In particular, we divided the range of objective function values into $L$ intervals, where $L$ is the number of local optima with distinct makespan values. Measuring the progress in the optimisation task according to the interval of the current solution allows the achievement of the approximation result for the hypermutation operator and the bound on the  time until the ageing operator reinitialises the search process. The bound on the reinitialisation time in turn allows us to transform Witt’s bound on the probability that the standard \oneoneea finds the approximation in a single run into an upper bound on the approximation time for the $(\mu+1)$~EA$^{ageing}$.

\subsection{Mathematical Techniques for the Analysis}

{\color{black} The analysis of randomised search heuristics borrows most of its 
tools from probability theory and
stochastic processes
~\cite{feller1968}. We will frequently use the {\it waiting time 
argument} which states that the number of independent successive experiments, each with the same success probability $p$, that is necessary to observe the first success, has a geometric distribution with parameter $p$ and expectation $1/p$~\cite{mitzenmacher2005probability}. Another such 
basic argument that we will use is the {\it union bound} which bounds the probability of 
the union of any number of outcomes from above by the sum of their 
individual probabilities~\cite{mitzenmacher2005probability}. Another important class of results from  
probability theory which will be used are bounds on 
the probability that a random variable significantly deviates from its 
expectation (i.e., tail inequalities). The fundamental result in this class is  
{\it Markov's inequality} which states that the probability that a positive random variable $X$ is greater than any $k\in \mathbb{R}$ is at most the expectation of $X$ divided by $k$.
When the random variable of interest $X$ is either binomially or hypergeometrically distributed, the following stronger bounds, which are commonly referred as Chernoff bounds, are available:
\begin{theorem}[Theorem~10.1 and Theorem~10.5 in~\cite{DoerrNewTool}]\label{thm-cher}
If $X$ is a random variable with binomial or hypergeometric distribution, then, 
\begin{enumerate}
\item \emph{$\prob{X\leq(1-\delta)E[X]}\leq \exp{(-\delta^2 E[X]/2}$)} for any 
$\delta\in [0,1]$
\item \emph{$\prob{X\geq(1+\delta)E[X]}\leq 
\exp{(-\min{\{\delta, \delta^2\}}E[X]/3})$} for any $\delta\geq 0$
 %\left(\frac{e}{1+\delta}\right)^{(1+\delta) E[X]}$} for any $\delta>0$
\end{enumerate}
\end{theorem}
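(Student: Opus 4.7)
The plan is to prove the Chernoff bounds via the classical Bernstein/Chernoff moment generating function (MGF) method, handling the binomial and hypergeometric cases in sequence. First I would fix the binomial case $X=\sum_{i=1}^{n}X_{i}$ with independent $X_{i}\sim\mathrm{Bernoulli}(p_{i})$ and $\mu:=E[X]=\sum_{i}p_{i}$. For any $t>0$, applying Markov's inequality (already stated in the subsection) to the nonnegative random variable $e^{tX}$ gives $\prob{X\geq(1+\delta)\mu}\leq E[e^{tX}]/e^{t(1+\delta)\mu}$. By independence, $E[e^{tX}]=\prod_{i}(1+p_{i}(e^{t}-1))\leq\prod_{i}\exp(p_{i}(e^{t}-1))=\exp(\mu(e^{t}-1))$, where the inequality uses $1+x\leq e^{x}$.

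Next I would optimise $t$. Setting $t=\ln(1+\delta)$ yields the textbook bound $\prob{X\geq(1+\delta)\mu}\leq\bigl(e^{\delta}/(1+\delta)^{1+\delta}\bigr)^{\mu}$. Extracting the clean form $\exp(-\min\{\delta,\delta^{2}\}\mu/3)$ is then an analytic exercise: splitting into the two regimes $0\leq\delta\leq 1$ and $\delta>1$, one shows $(1+\delta)\ln(1+\delta)-\delta\geq\delta^{2}/3$ in the first case (via the Taylor expansion of $\ln(1+\delta)$) and $(1+\delta)\ln(1+\delta)-\delta\geq\delta/3$ in the second (via monotonicity/convexity). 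The lower tail is symmetric: choosing $t<0$, the same derivation produces $\prob{X\leq(1-\delta)\mu}\leq\bigl(e^{-\delta}/(1-\delta)^{1-\delta}\bigr)^{\mu}$, and the one-sided inequality $(1-\delta)\ln(1-\delta)+\delta\geq\delta^{2}/2$ on $[0,1]$ delivers the promised $\exp(-\delta^{2}\mu/2)$.

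The main obstacle is the hypergeometric case, since the indicator variables summed by $X$ are no longer independent and the MGF factorisation above fails. My plan is to invoke Hoeffding's classical reduction (1963): for every convex function $\varphi$, $E[\varphi(X_{\mathrm{hyp}})]\leq E[\varphi(X_{\mathrm{bin}})]$, where $X_{\mathrm{bin}}$ is the binomial obtained by sampling \emph{with} replacement under the same marginal success probability. Applying this with $\varphi(x)=e^{tx}$ (convex for any real $t$) shows that the MGF of the hypergeometric is dominated by $\exp(\mu(e^{t}-1))$, so the rest of the derivation transfers verbatim. An alternative is to appeal to the fact that the indicator variables of a hypergeometric draw are negatively associated and then use the Chernoff–Hoeffding inequality for negatively associated sums, which again yields the same MGF bound. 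Either route avoids any modification of the constants in the two claimed inequalities, and together with the binomial argument completes the proof.
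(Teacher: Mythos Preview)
Your proof proposal is correct and follows the standard textbook route for Chernoff bounds. However, there is nothing to compare it against: the paper does \emph{not} prove this theorem. It is stated purely as a citation (Theorems~10.1 and~10.5 in~\cite{DoerrNewTool}), accompanied only by the one-sentence remark that the bounds, originally phrased for sums of independent Bernoulli variables, are known to extend to the hypergeometric distribution. No argument is given or sketched.

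So your MGF derivation plus the Hoeffding sampling-without-replacement reduction (or, equivalently, the negative-association argument) is a genuine proof where the paper simply appeals to the literature. If anything, your write-up is more informative than what the paper provides; the only caveat is that in the context of this paper a full proof would be out of scope, which is presumably why the authors chose to cite rather than reprove the result.
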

Note that the original statement of the theorem, like most Chernoff bounds do, 
refers to the sum of independent binary random variables, i.e., 
binomial distribution. However, it is shown in various works (including~\cite{DoerrNewTool} itself) that the theorem holds for the hypergeometric 
distribution as well.
We refer the reader to~\cite{DoerrNewTool, OlivetoHeYao2011, jansenbook, LehreOliveto2018, lenglerNew} for comprehensive surveys on the 
methods and approaches used for the analysis of randomised search 
heuristics. Finally, the following lower bound on the probability that a 
binomially distributed random variable deviates from its mean will also be used in our 
proof.
\begin{lemma}[Lemma~10.15 in \cite{DoerrNewTool}] \label{lem:binom} Let $X$ be a binomial random 
variable with parameters $n\geq0$ and $p=1/2$. Then, \emph{$\prob{X\geq
\frac{n}{2}+\frac{1}{2}\sqrt{\frac{n}{2}}} \geq \frac{1}{8}$}.
\end{lemma}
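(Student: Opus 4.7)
The plan is to exploit the symmetry of $X \sim \mathrm{Bin}(n, 1/2)$ about its mean $n/2$, combined with a Stirling-type upper bound on the central probability mass. Writing $t := \frac{1}{2}\sqrt{n/2}$, the key decomposition is
\[
\prob{X \geq n/2 + t} \;=\; \prob{X \geq n/2} \;-\; \prob{n/2 \leq X < n/2 + t},
\]
so the goal is to show the first term is at least $1/2$ and the second term is at most $3/8$.

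By the symmetry of $\mathrm{Bin}(n, 1/2)$ we immediately have $\prob{X \geq n/2} \geq 1/2$ (with equality for odd $n$; strictly greater for even $n$ by the atom at $n/2$). For the middle range, I would use that the binomial pmf is unimodal with mode at $\lfloor n/2 \rfloor$, so every $\prob{X = k}$ is bounded by $\binom{n}{\lfloor n/2 \rfloor}/2^n$. The classical Stirling estimate gives $\max_k \prob{X = k} \leq \sqrt{2/(\pi n)}$, and since the interval $[n/2, n/2 + t)$ contains at most $\lceil t \rceil + 1$ integers, this yields
\[
\prob{n/2 \leq X < n/2 + t} \;\leq\; (t+1)\sqrt{\frac{2}{\pi n}} \;=\; \frac{1}{2\sqrt{\pi}} + O(1/\sqrt{n}),
\]
using the convenient identity $t\sqrt{2/(\pi n)} = \frac{1}{2}\sqrt{n/2}\cdot\sqrt{2/(\pi n)} = 1/(2\sqrt{\pi})$.

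Combining the two estimates gives
\[
\prob{X \geq n/2 + t} \;\geq\; \frac{1}{2} - \frac{1}{2\sqrt{\pi}} - O(1/\sqrt{n}) \;\approx\; 0.218 - o(1),
\]
which comfortably exceeds $1/8$ once $n$ is sufficiently large (an explicit calculation shows $n \geq 100$ is more than enough). For the finitely many small values of $n$ below this threshold, the inequality is verified directly by evaluating $\sum_{k \geq \lceil n/2 + t \rceil} \binom{n}{k}/2^n$ from the exact pmf.

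The main obstacle is that the target constant $1/8$ is essentially tight: for $n = 3$ one checks that $\prob{X \geq 3} = 1/8$ exactly. Hence neither the symmetry step nor the Stirling step has much slack to spare, and the argument must carefully track the parity of $n$ together with the rounding of the non-integer threshold $n/2 + t$ up to $\lceil n/2 + t \rceil$ in the definition of the event. Apart from these technicalities, the proof is a clean exercise in binomial tail analysis via the central-binomial-coefficient bound.
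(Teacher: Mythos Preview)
The paper does not prove this lemma at all; it is quoted verbatim as Lemma~10.15 from the cited reference and then used as a black box in the proof of Lemma~\ref{lem:ageto}. There is therefore no in-paper argument to compare your proposal against.

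Your approach is the standard one for this anti-concentration bound and is correct: decompose via symmetry, control the central mass using the Stirling estimate $\max_k \prob{X=k}\leq\sqrt{2/(\pi n)}$, and verify small~$n$ by hand. One minor slip: a half-open interval of length~$t$ contains at most $\lceil t\rceil$ integers, not $\lceil t\rceil+1$; however, since the bound you actually use in the display is $t+1\geq\lceil t\rceil$, the inequality is unaffected. Your observation that $n=3$ attains equality is also correct and explains why the constant~$1/8$ cannot be improved in the statement.
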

}

\subsection{General Preliminary Results}
%Having described the problem and the algorithms that will be analysed, 

{\color{black}In the rest of the paper we 
will refer to any solution $x$ to the \partition problem as a 
\emph{local optimum} if there exists no solution with smaller makespan which 
is at Hamming distance 1 from $x$. Moreover, $\mathcal{L}:=\{\ell_{1}, 
\ell_{2}, \ldots, \ell_{L}\}$ will denote the set of all local optima for a 
\partition instance where $L:=|\mathcal{L}|$ and the local optima $\ell_{i}$ 
are indexed according to non-increasing makespan.}

{\color{black}We can now state the following helper lemma {\color{black}which bounds the expected number of iterations that the \muoneeaageing and the \oneoneais with current fitness better than
$\ell_i$ require to reach a fitness at least as good as  $f(\ell_{i+1})$.}

{\color{black}
\begin{lemma}\label{lem:local}
 Let $x\in\{0,1\}^n$ be a non-locally optimal solution to the partition instance 
such that $f(\ell_i)>f(x)\geq f(\ell_{i+1})$ for some $i\in[L-1]$. 
% be the local optimal solution with 
% the largest makespan among all local optima whose makespans are at least as 
% good as $x$'s.
Then, 
\begin{enumerate}
\item The \oneoneais with current solution $x$ samples a solution $y$ such that $f(y)\leq f(\ell_{i+1})$ 
in  at most $2n^2$  expected iterations.
\item The \muoneeaageing with $\mu=O(\log{n})$, $\tau=\Omega(n^{1+c})$  for an 
arbitrarily small constant $c$ and current best solution $x$, 
samples a solution $y$ such that $f(y)\leq f(\ell_{i+1})$ in  $2en^2(1+o(1))$  
iterations  
with probability at least $1-n^{-\Omega(n^{c/2})}$.
\end{enumerate}

\end{lemma}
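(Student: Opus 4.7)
My plan rests on one structural observation and several probabilistic arguments. The structural observation is that by the non-increasing indexing of $\mathcal{L}$ no local optimum has makespan strictly inside the open interval $(f(\ell_{i+1}),f(\ell_i))$; consequently, every solution $z$ the algorithm holds while $f(\ell_i)>f(z)>f(\ell_{i+1})$ is non-locally optimal and admits a Hamming-$1$ neighbour of strictly smaller makespan. Both parts then reduce to arguing that the algorithms reliably execute such improving single-bit flips until the trajectory drops below $f(\ell_{i+1})$.

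For part 1, I observe that in each outer iteration of \oneoneais the first bit sampled by FCM is uniform in $\{1,\dots,n\}$, so with probability at least $1/n$ it coincides with an improving bit of the current solution, producing a constructive mutation that is immediately accepted and terminates the inner loop. By a standard waiting-time argument the expected number of outer iterations between two consecutive accepted improvements is therefore at most $n$. The expected number of improvements required to leave the interval is bounded by exploiting the problem structure: each improving 1-bit flip in \partition corresponds to moving a single job and strictly reduces the machine discrepancy $|\sum_i p_ix_i-\sum_ip_i(1-x_i)|$, which combined with a fitness-level style accounting over the distinct makespan values reachable inside the gap gives a bound linear in $n$ on the length of such a chain. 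Multiplying the per-improvement waiting time by this linear bound yields $n\cdot n=n^2$ outer iterations, and a factor of $2$ absorbs the slack from converting per-improvement expectations into an upper bound on the total expected time via Wald-style reasoning.

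For part 2, I would first show by the standard $(\mu+1)$ takeover argument that within $O(\mu\log n)=O(\log^2 n)$ iterations the current best solution takes over the population: a copy of the best is selected and duplicated with probability at least $1/\mu$, while $(\mu+1)$ selection removes strictly worse individuals, so the number of copies of the best performs a biased random walk that reaches $\mu$ with high probability by a Chernoff bound. After takeover, selection picks a copy of the best with probability $1-o(1)$ per iteration, and standard bit mutation flips exactly the improving bit with probability at least $(1/n)(1-1/n)^{n-1}\geq 1/(en)$, giving $en(1+o(1))$ expected iterations per improvement. Combined with the same linear bound on the number of improvements, this yields the claimed $2en^2(1+o(1))$. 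The ageing threshold $\tau=\Omega(n^{1+c})$ is far larger than this budget, so Theorem~1 applied to the number of accepted improvements within an ageing window guarantees that the best individual is rejuvenated often enough to survive throughout the relevant horizon, and a union bound over the $O(n^{1-c})$ such windows that fit in $2en^2(1+o(1))$ iterations yields the stated failure probability $n^{-\Omega(n^{c/2})}$. The main obstacle in both parts is the linear bound on the length of the improving chain inside the gap, which requires a careful problem-specific combinatorial argument exploiting the monotone decrease of the discrepancy; the remaining probabilistic pieces are routine applications of waiting-time, takeover and Chernoff arguments.
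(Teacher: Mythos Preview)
Your overall decomposition matches the paper's: bound the expected waiting time for one improving single-bit move by $n$ (respectively $en(1+o(1))$ after takeover) and multiply by a bound on the number of such moves needed to cross the gap. The probabilistic pieces---the $1/n$ success probability of the first FCM bit-flip, the takeover argument, and the handling of ageing---are essentially what the paper does.

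The gap is in the combinatorial bound. Neither ``the discrepancy strictly decreases'' nor ``fitness levels over distinct makespan values inside the gap'' gives a linear bound: for general \partition instances the number of distinct makespan values in an interval can be exponential in $n$, and a chain of improving single-job moves is not a priori of linear length merely because the discrepancy is monotone. You flag this as ``the main obstacle'' but do not supply the argument, and the sketch you give does not work.

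The paper does \emph{not} bound the number of arbitrary improvements. It singles out a specific step, a \emph{large improvement}: moving the largest job $p^{*}_{t}$ on the fuller machine whose transfer still improves the makespan, and shows that at most $2n$ such steps suffice to drop below $f(\ell_{i+1})$. The count rests on two potentials. First, set $D_t=f(X_t)-f(\ell_{i+1})$ and let $r_t$ be the least number of smallest jobs whose total weight is at least $D_t$; a short contradiction (otherwise moving all small enough jobs off the fuller machine would yield a local optimum with makespan strictly between $f(\ell_{i+1})$ and $f(\ell_i)$, which cannot exist) gives $p^{*}_{t}\geq p_{n-r_t+1}$. Every non-switching large improvement then reduces $r_t$ by at least one, so at most $n$ of these occur. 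Second, every large improvement that swaps which machine is fuller forces $p^{*}_{t'}<p^{*}_{t}$ for all later $t'$ (the new emptier machine already carries load $f(X_t)-p^{*}_t$, so adding any job of weight $\geq p^{*}_t$ would exceed the current makespan), and since $p^{*}_{t}$ takes at most $n$ distinct values there are at most $n$ switches. This double-potential argument is the missing ingredient; once in place, the $2n^2$ and $2en^2(1+o(1))$ bounds follow as you outline.

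For part 2 the paper obtains the tail via an iterated Markov inequality on the waiting time for a single large improvement, $\Pr[T_{\mathrm{imp}}\geq n^{c}E[T_{\mathrm{imp}}]]\leq\bigl(\Pr[T_{\mathrm{imp}}\geq n^{c/2}E[T_{\mathrm{imp}}]]\bigr)^{n^{c/2}}$, rather than Chernoff over ageing windows; your route could also yield a sufficient tail bound, but the specific form $n^{-\Omega(n^{c/2})}$ in the statement comes from this iteration.
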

\begin{proof}
 Let $X_t$ be the best solution of the \muoneeaageing or the current solution
 of the \oneoneais at time $t$,
and $p_{t}^{*}$ denote the weight of the largest job in 
$X_t$ that can be moved from the fuller machine to the emptier machine such 
that the resulting solution yields a better makespan than $X_t$ (given that 
no other jobs are moved). 
Moreover, we will define $D_t:=f(X_t)-f(\ell_{i+1})$ and let $r_t$ be the 
smallest 
$j\in [n]$ such that $\sum_{i=n-j+1}^{n}p_i \geq D_t$.  

We will first prove that $p^{*}_{t}\geq p_{n-r_t+1}$ by contradiction. The 
assumption  $p^{*}_{t}<p_{n-r_t+1}$ implies that $\sum_{p_i\leq p^{*}_{t}}p_i< 
D_t$. Now, we consider a solution $y$ where all the jobs with 
weight at most $p_{t}^{*}$ in the fuller machine of $X_t$ are moved to the 
emptier machine but otherwise identical to $X_t$. Thus, 
$f(X_t)>f(y)>f(X_t)-D_t=f(\ell_{i+1})$ while $y$ is a local optimum. Since 
there exist no local optima with makespan value in $[f(x),f(\ell_{i+1})]$ by 
definition, the initial assumption creates a contradiction and we can conclude 
that $p^{*}_{t}\geq p_{n-r_t+1}$. 
% 

% For \oneoneais, the probability that in the first mutation step is at least $1/n$. Since the resulting solution has smaller makespan the hypermutation operator stops after this first mutation step. Due to the waiting time argument, in at most $n$ expected iterations 

We consider an iteration where only the job with processing time $p^{*}_{t}$ is moved from 
the fuller machine to the emptier machine of $X_t$, and will denote any such event as a \emph{large improvement}. After a large improvement, either the makespan will decrease by $p_{t}^{*}$ or the emptier machine in $X_t$ becomes the fuller machine in $X_{t+1}$ and $f(X_t)-p_{t}^{*}<f(X_{t+1})<f(X_t)$. We will denote the latter case as a \emph{switch}. 

In the former case, if the makespan decreases by $p_{t}^{*}$, then $r_{t+1}\leq r_t -1$ since $D_{t+1}=D_t - p_{t}^{*}\leq D_t - p_{n-r_t+1}$. Since $r_{t+1}\leq r_t\leq n$ by definition, after at most $n$ large improvements where $f(X_{t+1})=f(X_t)-p_{t}^{*}$, $f(X_t)\leq f(\ell_{i+1})$. 

If a switch occurs at time $t$, then the load of the emptier machine of $X_{t'}$ will be at least $f(X_t)-p_{t}^{*}$ for any $t'>t$ and moving a single job with processing time at least $p_{t}^{*}$ from the fuller machine to the emptier machine of $X_{t'}$ will yield a solution with makespan worse than $f(X_{t+1})$. Therefore, if a switch occurs at time $t$, by the definition of $p_{t}^{*}$, for any $t'>t$, $p_{t'}^{*}< p_{t}^{*}$. Since there are at most $n$ distinct processing times among all jobs, there can be at most $n$ large improvements throughout the optimisation process where a switch occurs. Thus, we can conclude that in at most $2n$ large improvements, $f(X_t)\leq f(\ell_{i+1})$.

We will now bound the waiting time until a large improvement occurs, $T_{imp}$, separately for the \oneoneais and the $(\mu+1)$~EA$^{ageing}$.

For the $(1+1)$~IA$^{hyp}$, the probability that the job with processing time $p^{*}_{t}$ is moved from 
the fuller machine to the emptier machine in the first mutation step is at least $1/n$. Since the resulting solution has smaller makespan, the hypermutation operator stops after this first mutation step. Due to the waiting time argument, $E[T_{imp}]=n$. Our first claim follows by multiplying this expectation with $2n$.

For the $(\mu+1)$~EA$^{ageing}$, we will first show that it takes $o(n)$ expected time until the best 
solution takes over the population if no improvements are 
found. Given that there are $k$ individuals in the population with fitness 
value $f(X_t)$, with probability $(k/\mu)(1-1/n)^{n}\geq (k/\mu)(1-1/n)e^{-1}$,
the \muoneeaageing picks one of the $k$ individuals as parent, does not flip any bit and adds a new solution with the same fitness value to the population. Thus, in 
expected $\sum_{k=1}^\mu(\mu/k)e(1-1/n)^{-1}\leq e \mu \ln {\mu}(1+o(1))=o(n)$ 
generations, the whole population consists of 
solutions with fitness value at least $f(X_t)$. 
In the following generations 
with probability at least $1/n (1-1/n)^{n-1}\geq 1/(en)$ the SBM only moves the 
job with 
weight $p_{t}^{*}$ and improves the best solution. Thus, in at most 
$e \mu \ln {\mu}(1+o(1))+en=en(1+o(1))$ expected  iterations of the \muoneeaageing a
large improvement occurs. 

Since $T_{imp}$ is a waiting time, we can use Markov's inequality 
iteratively in the following way to show that the ageing operator in the \muoneeaageing does 
not trigger with overwhelmingly high probability: 
% \begin{align*}
$\prob{T_{imp}\geq n^c \cdot E[T_{imp}]}\leq 
\prob{T_{imp}\geq 
n^{c/2}
\cdot E[T_{imp}]}^{n^{c/2}}=n^{-\Omega(n^{c/2})}$.
% \end{align*}
The second claim is obtained by multiplying the bound on the waiting time 
with $2n$.
% 
% % 
\end{proof}}
}

%The partition problem-AIS operators-Simple AIS algorithm
%The class of partition problems is a widely used class of combinatorial optimisation problems in the literature. Albeit it is a simple looking combinatorial problem, it is classified as an NP-hard problem, means no exact solution can be found for all possible instances of Partition problem in polynomial time. This class is generally defined as the problem of allocating $n$ jobs with different processing times to $m \geq 1$ machines. Different versions of the partition problem, also known as scheduling problem, are studied in the literature. Here, we consider a version in which is on allocation of $n$ jobs each with positive processing time $p_1 \cdots p_n$, to two identical machines such that the makespan (i.e., the load of the fuller machine) will be minimum \cite{NeumannWitt2010}. Any bit position in the bit string represents a job. Bit value 0 means it is allocated to machine 1 and bit value 0 means the corresponding job is on machine 2. The makespan cannot be more than the half of the total processing times of the jobs. 
%Hochbaum and 1+1 EA results. lemmas from Witt. 

%Notion and notations: the $n$ objects are sorted according to their weight $p_1 \geq p_2 \geq \cdots \p_n$ and the sum of the weights is $P=p_1+p_2+ \cdots p_n$. The fitness is the weight of the fuller machine and cannot be less than $P/2$. The sufficient conditions to improve a solution 

\begin{figure}[t] 
\centering
\includegraphics [scale=0.65]{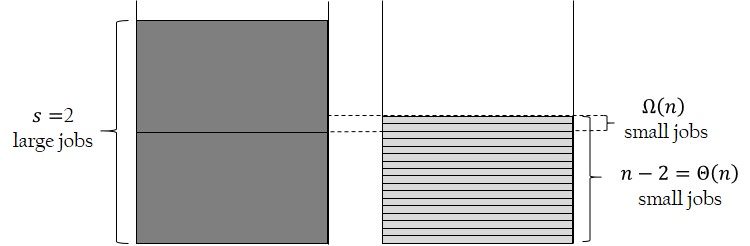}
\caption{The 4/3 approximation local optimum of $P^*_\epsilon$~\cite{Witt2005}}\label{fig:twojobs}
\end{figure}

\section{Generalised Worst-Case Instance} \label{sec:generalised}

%\begin{figure}[t]
% \centering
%\includegraphics [scale=0.65]{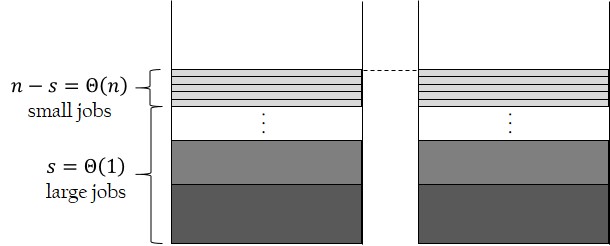}
%\caption{\WittGeneralisedInstance}
%\label{fig:generalised}
%\end{figure}

\begin{figure}[t]
 \centering
\includegraphics [scale=0.65]{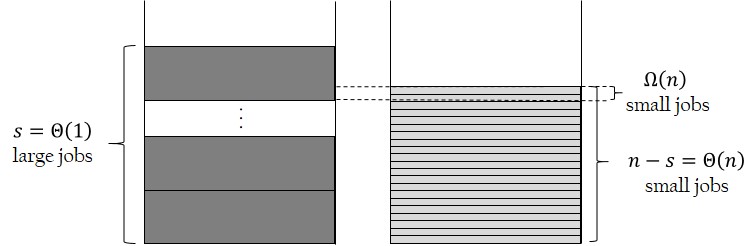}
\caption{A local optimum of \WittGeneralisedInstance}
\label{fig:generalised}
\end{figure}

The instance from the literature $P^*_\epsilon$ leading to a 4/3 worst-case expected approximation ratio for RLS and the \oneoneea consists of two large jobs with long processing times $p_1:=p_2:=1/3 -\epsilon/4$ and the remaining $n-2$ 
jobs with short processing times $p_i:=(1/3 + \epsilon/2)/(n-2)$ for $3\leq 
i\leq n$ (the sum of the processing times are normalised to 1 for cosmetic 
reasons)~\cite{Witt2005}.
 Any partition where one large job and half of the small jobs are placed on each machine is naturally a global optimum of the instance (i.e., the makespan is 1/2).
Note that the sum of all the processing times of the small jobs is slightly larger than the processing time of a large job.
The local optimum leading to the 4/3 expected approximation consists of the two 
large jobs on one machine and all the small jobs on the other. 
It is depicted in Figure \ref{fig:twojobs}.
The makespan of the local optimum is $p_1+p_2=2/3 -\epsilon/2$ and, in order to decrease it, a large job has to be moved to the fuller machine and at the same time at least $\Omega(n)$ small jobs have to be moved to the emptier machine. 
Since this requires to flip $\Omega(n)$ bits, which cannot happen with RLS and only happens with exponentially small probability $n^{-\Omega(n)}$ with the SBM of EAs,
their expected runtime to escape from such a configuration is at least exponential.

In this section, to highlight the power of the AIS to overcome hard 
local optima for EAs, we generalise the instance $P^*_\epsilon$  to contain an 
arbitrary number $s=\Theta(1)$ of large jobs and show how the considered AISs 
efficiently solve the instance class to optimality
%
%We will show that independently from the number of large jobs, the AIS can solve to optimality any instance of the generalised class 
by escaping from local optima efficiently 
while RLS and the \oneoneea  cannot. Hence, hypermutations and ageing are efficient at locating the optimal solution on a vast number of instances where SBM and local mutations alone fail.
The generalised instance class \WittGeneralisedInstance is defined as follows.
\begin{definition}\label{def-instance}
The class of \partition instances \WittGeneralisedInstance  are characterised by 
an even number of jobs $n$, an even number of large jobs $s=\Theta(1)$  and 
the following processing times: 
\begin{equation*}
p_i= 
\begin{cases}
      \frac{1}{2s-1}-  \frac{\epsilon}{2s}& \text{if } i\leq s, \\
      \frac{s-1}{n-s} \cdot (\frac{1}{2s-1}+ \frac{\epsilon}{2(s-1)}) & 
\text{otherwise,}
    \end{cases}
\end{equation*}
where $0<\epsilon < 1/(2s-1)$ is an arbitrarily small constant. 

\end{definition}
The instance class has the same property as the instance $P^*_\epsilon$. 
If all the large jobs are placed on one machine and all the small jobs on the other, then at least $\Omega(n)$ small jobs need to be moved in exchange for a large job to decrease the 
discrepancy (the difference between the loads of 
the machines). Such a local optimum allows to derive a lower bound on the worst-case expected approximation ratio of algorithms that get stuck there.
Obviously the greater the number of large jobs, the smaller the bound on the 
approximation ratio will be. 
{\color{black}Next, we will point out some important characteristics of the instance class. In particular, we will provide
upper bounds on the number of distinct makespan values that can be observed; first for all possible solutions and then only among local optima. 
Finally, we will provide restrictions on the distribution of the small jobs among the machines which hold for all suboptimal solutions.
\begin{property}\label{prop:levelwitt}
 Let $x\in\{0,1\}^n$ be a solution to the instance $I\in 
$ \WittGeneralisedInstance, $h$ and $k$ be the numbers of large and small jobs respectively 
on the fuller machine of $x$ and $\mathcal{L}\subseteq\{0,1\}^n$ be the set of local 
optima of $I$. Then,  

\begin{enumerate}
 \item $|\{y\in \mathbb{R}:\exists x\in\{0,1\}^n \;\text{s.t.}\; f(x)=y\}|=O(n)$
 \item $|\{y\in \mathbb{R}:\exists x\in \mathcal{L} \;\text{s.t.}\; f(x)=y\}|=O(1)$
\item $\forall x\in\{0,1\}^n $ s.t. $f(x)>1/2$, either $k=\Omega(n)$ and $x\notin \mathcal{L}$, or 
$\max(k,n-s-k)\geq \left( \frac{1}{2}+a\right)(n-s)$ for some $a=\Omega(1)$.
 \end{enumerate}
\end{property}
\begin{proof}
 Since $p_i=p_j=p_1$ for all $(i,j)\in [s]^2$ and $p_i=p_j=p_{s+1}$ for all 
$(i,j)\in ([n]\setminus[s])^2$, $f(x)= h \cdot p_1 +k\cdot p_{s+1}.$
 The first property follows from $(h,k)\in[s]\times[n-s]$ and $s=O(1)$.
 
 For the second property which considers only the local optima $\mathcal{L}$, 
note that if a local optimum $\ell$ with $h$ large jobs and 
$k$ small jobs exists, then a solution which has $h$ large 
jobs and $k'< k$ small jobs on its fuller machine cannot exist since, 
otherwise, moving a small job to the emptier machine would improve $\ell$.
Moreover, any solution with $h$ large jobs and $k'> k$ small 
jobs on its fuller machine cannot be a local optima since it can be improved by 
moving a small job from its fuller machine to its emptier machine. Thus, there 
exists a unique locally optimal makespan value for each $h\in[s]$. The property 
follows from $s=O(1)$.

In order to show the third property, we will partition the set of suboptimal solutions
$\mathcal{X}\subset \{0,1\}^n$ into three disjoint sets:

\begin{itemize}
 \item $\mathcal{X}_1:= \{x \in \X : f(x)\geq \sum_{i=1}^{s}p_i + 
\left(\frac{1}{s+1}\sum_{i=s+1}^{n}p_i \right)\}$,
 \item $\mathcal{X}_2:=\{ x\in \X\setminus \mathcal{X}_1 :h<s \wedge x 
\notin \mathcal{L}\}$ ,
\item $\mathcal{X}_3:=\X\setminus (\mathcal{X}_1 \cup 
\mathcal{X}_2)$,
\end{itemize}
for all $x\in\mathcal{X}_1$, since $ f(x)\geq \sum_{i=1}^{s}p_i + 
\left(\frac{1}{s+1}\sum_{i=s+1}^{n}p_i \right)>\frac{1}{2}+ 
\frac{1}{s+1}\sum_{i=s+1}^{n}p_i$, moving a single small job from the fuller machine
to the emptier machine always improves the 
makespan, thus $x\notin\mathcal{L}$. Moreover, since the sum of the processing times of the 
large jobs on the fuller machine 
can be at most $\sum_{i=1}^{s}p_i$, we get $k \geq (n-s)/(s+1) = \Omega(n)$. 

For any $x\in \mathcal{X}_2$, $x\notin \mathcal{L}$ by definition. We will prove by contradiction 
that $k=\Omega(n)$. Let us assume $k=o(n)$. Since the condition $h<s$ implies that there is at least
one large job in the emptier machine, the load of the emptier machine is at least
\begin{align*}
f(x)&\geq 
\left(\frac{1}{2s-1}-\frac{\epsilon}{2s}\right)
 +(n-s-o(n))\cdot 
\frac{s-1}{n-s}\left(\frac{1}{2s-1}+\frac{\epsilon}{2\left(s-1\right)}
\right) (1-o(1))  \\ &\geq \frac{1}{2} \left(1+\frac{1}{2 
s-1}+\frac{(s-1)\epsilon}{s}\right)(1-o(1)) > \frac{1}{2} \;\ \text{(for large enough $n$)},
\end{align*}
and contradicts that it is the emptier machine. Thus, $k=\Omega(n)$.

Finally, we will show that for all $x\in \mathcal{X}_3$, 
$\max(k,n-s-k)\geq \left( \frac{1}{2}+a\right)n$ for some $a=\Omega(1)$. When $x\in \mathcal{X}_3$ and $h=s$, then 
$k < \frac{n-s}{s+1}$ since $f(x)<\sum_{i=1}^{s}p_i + 
\left(\frac{1}{s+1}\sum_{i=s+1}^{n}p_i \right)$. The inequality $k < \frac{n-s}{s+1}$ in turn implies that $n-s-k>(n-s)-\frac{n-s}{s+1}=(n-s)(1-(1/(s+1)))\geq (1/2 + a) n$ for some $a$.
 If $x\in \mathcal{X}_3$ and $x\in \mathcal{L}$, then 
either there are no small jobs on the fuller machine ($k=0$), or the 
difference between the loads is so small that moving a small job to the emptier 
machine does not improve the makespan. For the latter case, since $f(x)>\frac{1}{2}$,
the machines cannot have the same number of small and large jobs. 
Moreover, since $s$ and $n$ are even intergers and $x\in\mathcal{L}$, 
if one of the machines has more than half of the large (small) 
jobs assigned to it, then more than half of the small (large) jobs has to be 
assigned to the other machine, because otherwise moving a single small job from the fuller machine
to the emptier one would improve the makespan.
 Without loss of generality (w.l.o.g.), let $M_1$ be the machine with more small jobs and fewer large jobs. 
Since $\sum_{i=1}^{n}p_i=1$, the load of $M_1$ is at least $1/2 - p_{s+1}/2$.
By definition, the number of large jobs in $M_1$, is bounded above by $s/2 -1$. 
Then, the contribution of small jobs to the load  of $M_1$ 
is at least \[
\frac{1}{2}\left(s\cdot p_1 + (n-s)p_{s+1} -p_{s+1}
\right)  - 
\left(\frac{s}{2}-1\right)p_1=\frac{
n-s-1}{2}p_{s+1} 
+ p_{1},
\]
where the first expression makes use of  $s\cdot p_1 + (n-s)p_{s+1}=1$. We 
can divide by $p_{s+1}$ and establish that the number of small jobs is at least:
\begin{align*}
 \frac{n-s-1}{2} +\frac{p_{1}}{p_{s+1}} >\frac{n-s-1}{2}  + \frac{n-s}{s} 
 =(n-s)\left(\frac{1}{2}+\frac{1}{s}\right) \left(1-O\left(\frac{1}{n}\right)\right). 
\end{align*}
Thus, for all $x\in \mathcal{X}_3$, $\min{(k, n-s-k)}\geq 
(n-s)(\frac{1}{2}+a)$ where 
$a:=\min{(\frac{1}{s}-O(\frac{1}{n}),\frac{1}{2}-\frac{1}{s+1})}=\Omega(1)$.
\end{proof}}
%Such ratios are derived in the following lemma.
%\begin{lemma}
%For the \WittGeneralisedInstance class of instances, any solution which assigns all of the jobs of size $\frac{1}{2s-1}-  
%\frac{\epsilon}{2s}$ to a single machine cannot have an 
%approximation ratio smaller than $\frac{1}{2 s-1}-\epsilon +1$.
%\end{lemma}
%%
%%
%\begin{proof}
%If a solution $x$ assigns all of the large jobs (i.e., jobs of processing time 
%$\frac{1}{2s-1}-  \frac{\epsilon}{2s}$ ) to a single machine, then its load (i.e., total processing time) is at 
%least:
%\[
 %f(x) \geq  s \left(\frac{1}{2 s-1}-\frac{\epsilon}{2 s}\right)
%\]
%Since the makespan of the optimal solution is $1/2$, the approximation ratio of 
%$x$ is at least: 
%\[
% 2 s \left(\frac{1}{2 s-1}-\frac{\epsilon}{2 s}\right)=\frac{1}{2 s-1}-\epsilon 
%+1
%\]
%[Donya: If we already know the opt is 1/2, why do we need this Lemma?]
%\end{proof}
%
We now prove that the \oneoneea has exponential expected runtime on \WittGeneralisedInstance. 
The proof is similar to that of the 4/3 approximation~\cite{Witt2005}. It essentially shows that with constant probability the {\color{black}algorithm gets trapped in a state where it is exponentially unlikely to move the extra large jobs out of its fuller machine.}
That RLS also fails is essentially a corollary.
\begin{restatable}{theorem}{theoremone}\label{th:1+1}
The \oneoneea needs at least $n^{\Omega(n)}$ fitness function evaluations in expectation to optimise any instance of \WittGeneralisedInstance. 
\end{restatable}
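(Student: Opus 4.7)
The plan follows Witt's strategy for the original instance $P^{*}_\epsilon$~\cite{Witt2005}, adapted to the class \WittGeneralisedInstance. I identify the trap local optimum $\ell^{*}$ in which all $s$ large jobs lie on one machine (call it $M_2$) and all $n-s$ small jobs lie on $M_1$; its makespan is $sp_1 = s/(2s-1) - \epsilon/2$. I then prove (i) any improving mutation from $\ell^{*}$ must flip $\Omega(n)$ specific bits, so each iteration has success probability $n^{-\Omega(n)}$, and (ii) the trajectory of the \oneoneea reaches $\ell^{*}$ with probability $\Omega(1)$. The law of total expectation then yields $E[T] \geq \Omega(1) \cdot n^{\Omega(n)} = n^{\Omega(n)}$.

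For (i), write $(h', k')$ for the number of large and small jobs placed on $M_2$ after the mutation. An improving offspring must simultaneously satisfy $h'p_1 + k'p_{s+1} < sp_1$ and $(s-h')p_1 + (n-s-k')p_{s+1} < sp_1$. Using the identity $p_1/p_{s+1} = \frac{n-s}{s-1}(1 - O(\epsilon))$ obtained from Definition~\ref{def-instance}, the first inequality forces $h' < s$, and for the least-distant feasible choice $h' = s-1$ the second inequality reduces to $k' > (n-s)\cdot O(\epsilon) = \Omega(\epsilon n) = \Omega(n)$ since $\epsilon > 0$ is a fixed constant; analogous bounds hold for $h' \leq s-2$. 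Thus any improving $y$ differs from $\ell^{*}$ in at least $(s-h') + k' = \Omega(n)$ bits, and a single SBM step produces such a $y$ with probability at most $\sum_{d = \Omega(n)}^{n} \binom{n}{d}(1/n)^d = n^{-\Omega(n)}$, so the expected escape time from $\ell^{*}$ is $n^{\Omega(n)}$.

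For (ii), the uniform initialisation places each large-job bit to $M_2$ independently with probability $1/2$, so the event $E$ that all $s = \Theta(1)$ large jobs land on $M_2$ has probability $2^{-s} = \Omega(1)$. Conditional on $E$, I split the trajectory into an \emph{early phase} (while $k > K := c\epsilon n$ for a suitable constant $c$) and a \emph{safe phase} ($k \leq K$). In the early phase a single-bit flip of any small-job bit on $M_2$ is an accepted improvement (the state is $(s, k)$ with $L_2 > L_1$ and $k \geq 1$), so the waiting-time sum gives expected early-phase duration $\sum_{k = K+1}^{k_0} en/k = O(n\log(1/\epsilon)) = O(n)$ since $\epsilon$ is constant. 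Because SBM flips large-job and small-job bits independently, the event $A = $ ``no large-bit is flipped in the first $T^{*} = \Theta(n)$ iterations'' has probability $(1 - 1/n)^{sT^{*}} = e^{-\Theta(1)} = \Omega(1)$. Conditional on $A$ the dynamics on the small bits is unchanged, and Markov's inequality gives $\prob{k \leq K \text{ by time } T^{*} \mid A} \geq 1/2$, so with probability $\Omega(1)$ the safe phase is entered with $h = s$ intact. In the safe phase, the counting argument of (i) applies verbatim to show any accepted mutation with $h' < s$ requires $\Omega(n)$ bit flips, so its per-step probability is $n^{-\Omega(n)}$; over the $O(n \log n)$ steps needed to drive $k$ to $0$ by single small-bit flips, the total probability of any $h$-changing acceptance is $o(1)$. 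Therefore $\ell^{*}$ is reached with probability $\Omega(1)$, completing the bound $E[T] \geq n^{\Omega(n)}$.

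The main technical obstacle is the early phase: single-bit flips of large jobs are \emph{also} accepted improvements while $k > K$, so one cannot simply condition the full trajectory on having no large-bit flips. I circumvent this by exploiting that the early phase lasts only $O(n)$ expected iterations (since $\epsilon$ is constant so $\log(k_0/K) = O(1)$), during which the expected number of large-bit flips is only $O(s) = O(1)$, and the independence/Markov argument delivers the constant-probability no-disruption guarantee needed to confine the trajectory to the $h = s$ subspace until the safe phase is reached.
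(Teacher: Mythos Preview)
Your proof is correct and follows essentially the same approach as the paper: both arguments show that with probability $\Omega(1)$ all $s$ large jobs start on one machine, use the independence of large- and small-bit flips together with Markov's inequality to argue that the process reaches a ``safe'' configuration (your $k\le K=\Theta(\epsilon n)$ corresponds to the paper's $(1-\tfrac{\epsilon}{2(s-1)})$-fraction threshold) within $O(n)$ steps without any large job moving, and then show that from this configuration any accepted move of a large job requires simultaneously flipping $\Omega(n)$ small-job bits. The only cosmetic difference is that you push all the way to the exact local optimum $\ell^{*}$ and phrase the trapping condition via explicit $(h',k')$ constraints, whereas the paper argues trapping directly from the safe state via makespan bounds; neither detour affects the argument.
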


\begin{proof}
The proof is similar to that of the 4/3 approximation~\cite{Witt2005}.
We will first prove that with constant probability the \oneoneea 
initialises with all large jobs in $M_1$ and then fills $M_2$ 
with a $(1-\frac{ \epsilon}{2(s -1)})$ fraction of the small jobs before moving 
any large jobs to $M_2$ (Event $\mathcal{E}_1$).
From then on, we will show that the algorithm requires exponential expected 
time to move a large job. Since the sum of the processing times of large jobs 
is larger than $1/2$ (the makespan of the optimum), the expected 
optimisation time will be exponential as well.

With probability $(1/2)^{s}=\Theta(1)$, all large jobs are assigned to the same 
machine ($M_1$ w.l.o.g.) in the initial solution. 
%Hence the makespan S is at least $s/(2s-1) - \epsilon/2$.
The expected number of small jobs initially assigned to 
$M_1$ is $(n-s)/2$ and since they are assigned independently, the 
probability that more than $(n/2) + n^{3/5}$ of them are assigned is 
exponentially small due to Chernoff bounds. 

Let $k_t$ be the number of small jobs on the fuller machine $M_1$
at time $t$. As long as the large jobs are not moved, new solutions are only 
accepted if the number of small jobs in $M_2$ increases or stays the same. 
%Conditional on large jobs never having moved, 
A single small job is moved to $M_2$ with probability 
$(k_t/n)(1-1/n)^{n-s-1}\geq k_t /(e n)$.  
% at iteration $t \leq \frac{2 e (s-1)^2 }{s} $. 
 
 The expected time conditional on large jobs never having been moved until a 
$(1-\frac{ \epsilon}{2(s -1)})$ fraction of the small jobs are moved to $M_2$ 
is:
\begin{align*}
\sum_{k=\epsilon(n-s)/(2(s-1))}^{n/2+n^{3/5}} \frac{e n}{k} 
	&\leq  
	e n \sum_{k=\epsilon(n-s)/(2(s-1))}^{n/2 +n^{3/5}} 
\frac{2(s-1)}{\epsilon\left(n-s\right)}
	\\&\leq
	2 (s-1) \left(\frac{e}{\epsilon}\right)\left(\frac{n}{n-s}\right) \left(\frac{n}{2}+n^{3/5}\right) 
\\
%&= 2 (s-1) \frac{e}{\epsilon}\frac{\left(1-\epsilon\right)\left(1 + 
% o\left(1\right)\right) n }{2 s} \left(1+o(1)\right)\\
&\leq 2 (s-1) \frac{e}{\epsilon}\frac{n}{2} \left(1+o(1)\right) \leq  O(n)
%= \left(1+o(1)\right)\frac{e (s-1)^2 }{s} n
\end{align*}
Let $c$ be a constant such that $cn-1$ is at least twice as large as the above 
expectation. Due to Markov's inequality at least a $(1-\frac{ 
\epsilon}{2(s -1)})$ fraction of the small jobs are moved to $M_2$  in $cn-1$ 
iterations with probability at least $1/2$.  Now, we calculate the probability 
that no large jobs have been moved in $cn-1$ iterations. The probability that 
SBM does not move any large jobs to the emptier machine ($M_2$) is $1-s/n$ in 
each iteration. The probability that the large jobs are not moved during the 
first $c n -1$ iterations is at least  $(1-s/n)^{cn-1} \geq  e^{- c s 
}=\Omega(1)$. Hence,  Pr$\{\mathcal{E}_1\}=\Omega(1)$.

Now that we found the probability of event $\mathcal{E}_1$, we can find the 
expected runtime  conditional on $\mathcal{E}_1$.

Let at least 
% $k':= n-s - (\epsilon/2)(n-s)/(s-1) = (n-s) \big[(1-\frac{ 
% \epsilon}{2(s -1)})\big]$ small jobs (i.e., 
a $(1-\frac{ \epsilon}{2(s -1)})$ fraction of small jobs and all the large 
jobs be on $M_2$. 
The sum of the processing times of small jobs on $M_2$ is greater than 
the processing time of ($s-1$) large jobs (i.e., $\frac{s-1}{2s-1} - 
\frac{\epsilon}{2}\frac{s-1}{s}$) and greater than

\begin{align*}
 & (s-1) \left(\frac{1}{2s-1}+\frac{\epsilon}{2(s-1)}\right) - 
\frac{\epsilon}{2}\left(\frac{1}{2s-1} + \frac{\epsilon}{2(s-1)}\right)
\\ & = \frac{s-1}{2s-1} + \frac{\epsilon}{2} \bigg(1 - \frac{1}{2s-1} - 
\frac{\epsilon}{2(s-1)}\bigg)
{\color{black}> \frac{s-1}{2s-1}+ \frac{\epsilon}{3} - \frac{\epsilon^2}{4} > 
\frac{s-1}{2s-1} + \frac{\epsilon}{4}.}
\end{align*}

Then the makespan can be at most $S< 1 - \big(\frac{s-1}{2s-1} + 
\frac{\epsilon}{4}\big) = \frac{s}{2s-1} - \frac{\epsilon}{4}$. If a large job 
was to be transferred to $M_2$, then its total time would become $S' > 
\frac{s}{2s-1} + \frac{\epsilon}{4} - \frac{\epsilon}{2s}>\frac{s}{2s-1}$. 
Hence, in order for a large job to be transferred to $M_2$, a processing time of 
at least $\epsilon/4$ has to be transferred to $M_1$ in the same iteration. Then 
the conditional expected time to transfer a large job would be exponential since 
at least a linear number of small jobs need to be moved from $M_2$ to $M_1$ such 
that their processing times of order $O(1/n)$ sum up to a constant $\epsilon/4$. 
{\color{black}Such a mutation requires at least a linear number of bits to be 
flipped  which happens with probability at most $n^{-\Omega(n)}$. The 
probability that such a mutation happens in $n^{\delta n}$ steps is still in the 
same asymptotic order for a sufficiently small constant $\delta$. Since any mutation
that increases the load of $M_2$ will not be accepted and the large jobs must be equally 
distributed among the machines in the optimal solution, with $\prob{\mathcal{E}_1}\cdot (1-n^{-\Omega(n)}) =\Omega(1)$ the runtime will be in the order of $n^{\Omega(n)}$. 
The theorem statement follows since $E(T) \geq E(T \mid \mathcal{E}_1)\cdot \text{Pr}\{\mathcal{E}_1\}=n^{\Omega(n)}$ by the definition of expectation of a random variable. 
}
\end{proof}

In the following subsection, we will show that hypermutations are efficient. 
Afterwards we will do the same for ageing. 
\subsection{Hypermutations}

We will start this subsection by proving some general statements about 
hypermutations. The hypermutation operator flips all the bits in a bitstring 
successively and evaluates the fitness after each step. Unless an improvement is 
found first, each hypermutation operation results in a sequence of $n$ solutions 
sampled in the search space. This sequence of solutions contains exactly one 
bitstring for each Hamming distance $d\in [n]$ to the initial bitstring. 
Moreover, the string which will be sampled at distance $d$ is uniformly 
distributed among all the bitstrings of distance $d$ to the input solution. 

{\color{black}The number of possible outcomes grows exponentially in the number of flipped bits until the ($n/2$)th bit is flipped and afterwards reduces at the same rate.
Thus, the first and the last constant number of sampled  strings 
are picked among a polynomially large subset of the search space.}
We will now provide a 
lemma regarding the probability of particular outcomes in the first and 
the last $m$ bitstrings of the sequence.
% \begin{restatable}{lemma}{bitflip} 
% \label{lem:bitflip}
% Given that no improvement is found, the probability that $k$ specific bit 
% positions are flipped by the hypermutation operator  in the first (or the last) 
% $m\geq k$ mutation steps is at least 
% % $\prod_{j=0}^{k-1}{\frac{m-j}{n-j}}\geq 
% $\left(\frac{m-k+1}{n-k+1}\right)^{k}$.
% \end{restatable}
% 
% \begin{proof}
% Picking a uniformly random permutation in which to flip the bits is equivalent to the random process where a number in $[n]$ is assigned to each bit position without replacement. 
% W.l.o.g., we index the $k$ bit positions  of interest from $x_1$ to $x_k$. The probability that bit $x_1$ is assigned a number in $[m]$ is $m/n$. Given that $\ell$ numbers in $[m]$ have been assigned to bits $x_i$ for all $i$ in $[\ell]$,  the probability that the $(\ell +1)$th bit to be assigned a number from $[m]$ is at least $(m-\ell)/(n-\ell)$. Therefore, the 
%  probability that $k$ specific bit mutations occur in the first $m$ 
% mutation steps is at least $\prod_{j=0}^{k-1}\frac{m-j}{n-j}$. Since $n\geq 
% m$, 
% $\frac{m-j}{n-j}\geq \frac{m-k}{n-k}$, for all $j \in \{0,1,\ldots,k-1\}$, and 
% consequently the series product is bounded from below by 
% $\left(\frac{m-k+1}{n-k+1}\right)^{k}$.
% 
% \end{proof}
% 
%                                           
\begin{restatable}{lemma}{hypinterv}\label{lem:hypinterv}
{\color{black}Let $x^{i} \in \{0,1 \}^{n}$ be the $i_{th}$ bitstring sampled by 
the hypermutation and for any arbitrary subset of indices $S\subset [n]$, let  
$x^{i}_{S}$  be the subsequence of $x^i$ which consists of bit positions in 
$S$. For 
any given target string $s^*\in\{0,1 \}^{|S|}$  and } integer 
$m>|S|$, the probability that $\forall i \in \{m,  \ldots, n-m 
\}$,   $x^{i}_{S}= s^*$ is at least 
$\left(\frac{m-|S|+1}{n-|S|+1}\right)^{|S|}$. 
\end{restatable}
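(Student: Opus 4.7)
The plan is to couple each hypermutation call with a uniformly random permutation $\pi$ of $[n]$ which dictates the order in which the bits of the initial string $x^{0}$ are flipped (the algorithm picks each new bit uniformly at random from the set $F$ of remaining ones), and then to translate the statement into a property of $\pi$ restricted to $S$. Writing $S^{+}\subseteq S$ for the positions where $x^{0}$ already disagrees with $s^{*}$ and $S^{-}:=S\setminus S^{+}$ for the positions where it agrees, the crucial observation is that each bit is flipped at most once within a single hypermutation call. Hence $x^{i}_{S}=s^{*}$ holds throughout $i\in\{m,\ldots,n-m\}$ if and only if every bit of $S^{+}$ appears among the first $m$ entries of $\pi$ (so that all necessary flips have occurred by step $m$) and every bit of $S^{-}$ appears among the last $m$ entries of $\pi$ (so that none of them is flipped before step $n-m+1$).

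Given this reformulation, I would lower bound the probability by revealing the positions of the $|S|$ elements of $S$ one at a time. Order $S$ arbitrarily as $b_{1},\ldots,b_{|S|}$ and let $A_{k}$ be the event that $b_{k}$ lies in its ``correct'' region of $\pi$ (the first $m$ positions if $b_{k}\in S^{+}$, the last $m$ positions if $b_{k}\in S^{-}$). Conditioned on $A_{1}\cap\cdots\cap A_{k-1}$, exactly $k-1$ of the $n$ positions are already occupied, and at most $k-1$ of them can lie in the correct region of $b_{k}$ (which itself has size $m$); hence at least $m-(k-1)$ of the $n-(k-1)$ still-available positions are valid for $b_{k}$, so
\[
\Pr\!\left(A_{k}\mid A_{1},\ldots,A_{k-1}\right)\;\geq\;\frac{m-k+1}{n-k+1}.
\]

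Multiplying these bounds and noting that $k\mapsto\frac{m-k+1}{n-k+1}$ is non-increasing in $k$ whenever $m\leq n$ (a one-line check shows the numerator of its forward difference is $m-n\leq 0$), each of the $|S|$ factors is at least $\frac{m-|S|+1}{n-|S|+1}$ and the product yields the desired bound $\left(\frac{m-|S|+1}{n-|S|+1}\right)^{|S|}$. I expect the main obstacle to be the first step: recognising that the FCM hypermutation can be coupled with a uniform random permutation of $[n]$ and that, because no bit is flipped twice within one call, the seemingly interval-wide condition on the substring $x^{i}_{S}$ collapses to the clean ``$S^{+}$ at the front, $S^{-}$ at the back'' constraint on $\pi$; after that reformulation the remaining argument is a standard urn-style calculation combined with an elementary monotonicity check.
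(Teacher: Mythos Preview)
Your proof is correct and follows essentially the same approach as the paper: both reformulate the event in terms of the random flip order (disagreement positions must be flipped within the first $m$ steps, agreement positions only after step $n-m$), then reveal the positions of the elements of $S$ one by one and lower bound each conditional probability by a term of the form $\frac{m-k+1}{n-k+1}$, finally using monotonicity of $k\mapsto\frac{m-k+1}{n-k+1}$ to obtain $\left(\frac{m-|S|+1}{n-|S|+1}\right)^{|S|}$. The only cosmetic difference is that the paper reveals all of $S_D$ (your $S^{+}$) first and then $S_A$ (your $S^{-}$), which yields slightly tighter intermediate factors but the same final bound.
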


\begin{proof}
Let $S_A$ and $S_D$ denote the set of positions where the initial subsequence 
$x^{0}_{S}$ and $s^*$ agree and disagree respectively and $ord(j)\in[n]$ 
denote the 
mutation step when the bit position $j\in S$ is flipped. Then,
$\forall i \in \{m,  \ldots,  n-m \}, \enskip  x^{i}_{S}= s^*$ holds if and 
only if $\forall j\in S_D,\enskip ord(j)\leq m$ and $\forall j\in S_A, 
\enskip ord(j)\geq n-m$. Given that there are at least $k$ different $j\in 
S_D$ such that $\enskip ord(j)\leq m$, the probability that there are at least 
$k+1$  different $j\in S_D$ such that $\enskip ord(j)\leq m$ is 
$(m-k)/(n-k)$. Thus, the probability that  $\enskip ord(j)\leq m$ 
holds for all $j\in S_D$ is at least $\prod_{k=0}^{|S_D|-1}\frac{m-k}{n-k}$. 
Similarly, if $\enskip ord(j)\leq m$ holds for all $j\in S_D$ and there are at 
least $k$ different $j\in S_A$ such that $\enskip ord(j)\geq n-m$, the 
probability that  there are at least $k+1$  different $j\in S_A$ such that 
$\enskip ord(j)\geq n-m$ is at least 
$(m-k)/(n-k-|S_D|)$. Thus, the probability that, $\forall j\in S_D,\enskip 
ord(j)\leq m$ and $\forall j\in S_A, \enskip ord(j)\geq n-m$ is at least 
$\left(\prod_{k=0}^{|S_D|-1}\frac{m-k}{n-k}\right)\left(\prod_{k=0}^{|S_A|-1}
\frac { m-k } { n-k-|S_D| } \right)\geq 
\left(\frac{m-|S|+1}{n-|S|+1}\right)^{|S|}$.
\end{proof}
We can observe that the probability bounds we get from this lemma are polynomial if $|S|$ is a constant. Moreover, if both $|S|=\Theta(1)$ and $m=\Omega(n)$ we obtain probabilities in the
order of $\Omega(1)$. 
% As an analogy, the more time you allow your actors to get 
% dressed/undressed before/after the show,  more elaborate the costumes that they 
% can pull off during the show. The caveat is that the total time is fixed and the 
% "show" has to be shorter. 
% {\color{red}{I would remove the analogies as the reviewers are unlikely to be 
% more familiar with theatre than with probabilities :) Nevertheless you guys 
% decide.}}

For \WittGeneralisedInstance, we would like to distribute two kinds of jobs 
evenly among the machines. While one type of job is constant in number, the other is 
in the order of $\Theta(n)$. So the previous lemma would provide reasonable 
probability bounds only for the large jobs. For the small jobs we will make use 
of the fact that the configuration we want is not any configuration, but an exact 
half and half split. Intuitively, it is obvious that if all the jobs start on one 
machine, at exactly the $\frac{n}{2}$th bit flip the split will be half and half. 
However, as the starting distribution gets further away from the extremes, it 
becomes less clear when the split will exactly happen. 
% Continuing the analogy, you would prefer the audience to be punctual if the show 
% is elaborate.
%
Fortunately, the fact that the number of small jobs is large will work in our 
favor to predict the time of the split more precisely.

\begin{restatable}{theorem}{hypmain}\label{thm:hypmain}
If the input bitstring of hypermutation has $\left(\frac{1}{2} + a\right)n$ 
1-bits for 
some constant $a>0$, then with  probability $1-e^{-\Omega(n c^2)}$, 
{\color{black}for any 
$c=\omega(\sqrt{\log{n}/n})$ such that $c\in (0,a)$, there 
exists a $k\in\{n\frac{a-c}{2a-c},\ldots,n\frac{a}{2a-c}\}$,} such that the 
number of 1-bits in the 
$k_{th}$ solution sampled by hypermutation has exactly $n/2$ 1-bits.
% {\color{yellow} If the input bitstring of hypermutation has $\left(\frac{1}{2} + a\right)n$ 
% 1-bits for some constant $a>0$, then the probability that any  solution sampled 
% after the $n\frac{a}{2a-c}$th  mutation  step for any  {\color{black}$c\in(0,a)$} 
% and $c=\omega(\sqrt{\log{n}/n})$ to have 
% more than $n/2$ 1-bits is in the order of $e^{-\Omega(n c^2)}$.}
\end{restatable}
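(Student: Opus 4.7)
The plan is to track the number of 1-bits $X_k$ in the $k$th bitstring $x^k$ sampled by the hypermutation and show that the integer-valued sequence $X_0, X_1, \ldots$ must pass through $n/2$ somewhere in the interval $[k_1, k_2]$ with $k_1 := \lceil n(a-c)/(2a-c) \rceil$ and $k_2 := \lfloor na/(2a-c) \rfloor$. Since hypermutation flips one distinct bit at a time, $|X_{k+1}-X_k|=1$ at every step; hence, by an integer intermediate-value argument, it suffices to show that $X_{k_1} > n/2 > X_{k_2}$ with probability at least $1-e^{-\Omega(nc^2)}$.

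Let $j_k$ denote the number of originally-1 bits among the first $k$ flipped positions. Then $X_k = (1/2+a)n + k - 2j_k$, and $j_k$ is hypergeometrically distributed with mean $E[j_k]=k(1/2+a)$, so $E[X_k]=(1/2+a)n-2ak$. A direct substitution of the endpoints yields $E[X_{k_1}] - n/2 = anc/(2a-c)$ and $n/2 - E[X_{k_2}] = anc/(2a-c)$, both of which are positive and of order $\Theta(nc)$. Note that $c<a$ implies $k_1<n/2<k_2$, so both endpoints lie in the range where the hypermutation sequence has not yet terminated.

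Next, I would apply the Chernoff bound for hypergeometric random variables (Theorem~\ref{thm-cher}) twice. To obtain $X_{k_1} > n/2$, it suffices to bound $\prob{j_{k_1} \geq E[j_{k_1}] + \Delta}$ where $\Delta := anc/(2(2a-c))$; setting $\delta := \Delta/E[j_{k_1}] = \Theta(c) < 1$ (since $c<a=\Theta(1)$) and using $E[j_{k_1}]=\Theta(n)$ gives $\delta^2 E[j_{k_1}] = \Theta(nc^2)$, so the deviation probability is at most $\exp(-\Omega(nc^2))$. A symmetric use of the lower-tail bound handles $\prob{X_{k_2}\geq n/2}$. A union bound then shows that with probability $1 - e^{-\Omega(nc^2)}$ we simultaneously have $X_{k_1}>n/2$ and $X_{k_2}<n/2$, and the $\pm 1$ step property forces some $k^* \in [k_1,k_2]$ with $X_{k^*} = n/2$.

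The main technical content lies in the Chernoff calculation together with verifying that the signed gap $E[X_k] - n/2$ is genuinely of order $\Theta(nc)$ at both endpoints; no deeper obstacle is anticipated. The hypothesis $c = \omega(\sqrt{\log n/n})$ enters only to guarantee $nc^2 = \omega(\log n)$, so that the failure probability is superpolynomially small and the statement is meaningful.
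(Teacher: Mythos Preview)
Your proposal is correct and follows essentially the same approach as the paper: express the 1-count after $k$ flips as $(1/2+a)n+k-2Y_k$ for a hypergeometric $Y_k$, apply the Chernoff bounds of Theorem~\ref{thm-cher} at the two endpoints of the interval, and conclude via the $\pm 1$ step (discrete intermediate-value) argument. The only cosmetic difference is that the paper's final union bound sums the tail probabilities over all $j\leq k_1$ and all $j\geq k_2$ (incurring a harmless extra factor of $n$ that is absorbed into $e^{-\Omega(nc^2)}$), whereas you bound only the two endpoint events; your version is slightly tighter but otherwise identical.
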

%and $X_k$ the last sampled bit-string
\begin{proof}
 Let $Y_k$ be the total number of 1-bits flipped by the hypermutation operator up to and including the $k$th mutation step.
Given any parent string $x\in\{0,1\}^n$ with $\left(\left(1/2\right)+a\right)n$ $1$-bits, $Y_k$ is a hypergeometrically distributed random variable with parameters $n$, $\left(\left(1/2\right)+a\right)n$, $k$ and expectation $E[Y_k]=\left(\left(1/2\right)+a\right)\cdot k$.

 Let $X_k$ be the bit-string sampled after the $k$th mutation step. It has $\left(\left(1/2\right)+a\right)n-Y_k$ of $x$'s $1$-bits which have never been flipped and $k-Y_k$ of its $0$-bits which are flipped to $1$-bits. Thus, the number of 1-bits in $X_k$, $|X_k|_1=\left(\left(1/2\right)+a\right)n+k-2Y_k$. Now, we can bound the probability that $|X_k|_1\geq n/2$ after a particular mutation step $k\geq \frac{n\cdot a}{2a -c}$ by using Theorem~\ref{thm-cher}. 
%%be the random bit-string sampled at the $k$th mutation step of the hypermutation operator. Since $|x|_1=\left(\left(1/2\right)+a\right)n$ is assumed, 
%$$|X_k|_1=(|x|_1 - Y_k )+ (k - Y_k)=\left(\left(1/2\right)+a\right)n+k-2Y_k,$$
 %where $Y_k$ is the number of 1-bits flipped after the $k$th mutation step,  
\begin{align*}
\prob{ |X_k|_1>n/2}&=\prob{Y_k < \frac{k + a \cdot n}{2} } 
= \prob{Y_k < \frac{k + a \cdot n}{k + 2 a \cdot k} \cdot E[Y_k]  } 
\\ &\leq \prob{Y_k < \left(1- \frac{c}{2a+1}\right)\cdot E[Y_k]  } 
\leq \exp{\left(-\left(\frac{c}{1+2a}\right)^{2} \frac{E[Y_k]}{2}\right)}\\
&\leq 
\exp{\left(-\left(\frac{c}{1+2a}\right)^2\frac{\left(\frac{1}{2}+a\right)\frac{
n\cdot a}{2a-c}}{2}\right)}= e^{-\Omega(n c^2)},
\end{align*}
where we used $k\geq \frac{n\cdot a}{2a-c}$ to obtain the first and the 
last inequality and Theorem~\ref{thm-cher} to obtain the second. Then, using 
the same line of arguments, we bound $\prob{ |X_k|_1<n/2}$ at  any given 
mutation step $k\leq n\frac{n \cdot (a-c)}{2a-c}$ by $e^{-\Omega(n c^2)}$:
\begin{align*}
\prob{ |X_k|_1<n/2}&=\prob{Y_k > \left(1+ \frac{a\cdot 
c}{(2a+1)(a-c)}\right)\cdot E[Y_k]  } \\
&\leq  \exp{\left(-\left(\frac{a\cdot c}{(2a+1)(a-c)}\right)^{2} 
\frac{E[Y_k]}{3}\right)}\leq e^{-\Omega(n c^2)},
\end{align*}
where in the last inequality we used that $a/(2a+1)=\Omega(1)$ and that $(a-c)=o(1)$ only when $c=\Omega(1)$.
Finally, we will use the union bound and the $c=\omega\left(\sqrt{\frac{\log{n}}{n}}\right)$ assumption to prove our claim.
\begin{align*}
&\prob{\forall k \in \left[\frac{n \cdot (a-c)}{2a-c},\frac{n \cdot 
a}{2a-c}\right]: |X_k|_1 \neq n/2} \\ &\leq 
\sum\limits^{n}_{j=\frac{n\cdot a}{2a-c}}\prob{|X_j|_1>n/2} + 
\sum\limits^{\frac{n\cdot (a-c)}{2a-c}}_{j=1}\prob{|X_j|_1<n/2}\leq n\cdot 
e^{-\Omega(n c^2)}= e^{-\Omega(n c^2)}.
\end{align*}
\end{proof}
Now we have all the necessary tools to prove that the \oneoneais can 
 solve \WittGeneralisedInstance efficiently. {\color{black}To prove our result, 
we will use the combination of Property~\ref{prop:levelwitt}.3, Lemma~\ref{lem:hypinterv}, and Theorem~\ref{thm:hypmain} to show that the probability of improvement is at least $\Omega(1)$ for any suboptimal solution and then obtain our bound on the expected runtime by Property~\ref{prop:levelwitt}.1, which indicates that there can be at most $O(n)$ improvements before the optimum is found.
The heart of the proof of the following theorem is to show that from any local 
optimum, hypermutations identify the global optimum with constant probability 
unless an improvement is found first. }

\begin{restatable}{theorem}{thaislocal}
The \oneoneais optimises the \WittGeneralisedInstance class of instances  in $O(n^2)$ expected fitness function evaluations.
\end{restatable}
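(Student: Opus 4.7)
The plan is to decompose the expected optimisation time into the cost of escaping local optima and the cost of transitioning between fitness levels. By Property~\ref{prop:levelwitt}.2, the set $\mathcal{L}$ contains only $O(1)$ distinct makespan values, so since the \oneoneais is elitist it can visit at most $O(1)$ distinct local-optimum fitness levels throughout the run. By Lemma~\ref{lem:local}, each transition from a non-locally-optimal state to a strictly smaller local-optimum fitness level costs at most $2n^{2}$ expected iterations. If in addition I can show that a single hypermutation from any local optimum $\ell$ produces a constructive mutation with probability $\Omega(1)$, then, since each hypermutation evaluates at most $n$ candidates, each such escape costs $O(n)$ expected function evaluations by the waiting-time argument, and the total bound follows as $O(1)\cdot O(n^{2}) + O(1)\cdot O(n) = O(n^{2})$.

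The core task is therefore to establish the $\Omega(1)$ escape probability from an arbitrary local optimum $\ell$ of an instance $I\in$\;\WittGeneralisedInstance. Since the global optimum has makespan $1/2$ (achieved by placing $s/2$ large jobs and $(n-s)/2$ small jobs on each machine), any local optimum has $f(\ell)>1/2$, so by Property~\ref{prop:levelwitt}.3 it satisfies $\max(k,n-s-k)\geq (1/2+a)(n-s)$ for some constant $a=\Omega(1)$. Exploiting the symmetry $f(x)=f(\bar x)$ that reflects the roles of the two machines, I may assume without loss of generality that the bitstring encoding $\ell$ has at least $(1/2+a')n$ ones for some constant $a'=\Omega(1)$; the $s=\Theta(1)$ large jobs contribute only $O(1)$ to this count and are absorbed into $a'$.

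Fix a sufficiently small constant $c\in(0,a')$ and set $m:=n(a'-c)/(2a'-c)=\Theta(n)$, so that $n-m=na'/(2a'-c)$ and the window $[m,n-m]$ coincides with the range guaranteed by Theorem~\ref{thm:hypmain}. Let $A$ denote the event that for every $i\in[m,n-m]$ the restriction of the sampled bitstring $x^{i}$ to the $s$ positions of the large jobs equals a fixed target that places exactly $s/2$ large jobs on each machine, and let $B$ denote the event that at some step $i^{\ast}\in[m,n-m]$ the sampled bitstring has exactly $n/2$ ones. Lemma~\ref{lem:hypinterv} with $|S|=s=\Theta(1)$ gives $\prob{A}\geq((m-s+1)/(n-s+1))^{s}=\Omega(1)$, while Theorem~\ref{thm:hypmain} applied with the chosen $c$ and $a'$ gives $\prob{\bar B}\leq e^{-\Omega(nc^{2})}=e^{-\Omega(n)}$. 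The complement-based union bound then yields $\prob{A\cap B}\geq \prob{A}-\prob{\bar B}=\Omega(1)$. On $A\cap B$ the bitstring $x^{i^{\ast}}$ simultaneously has $s/2$ large jobs on each machine and $n/2$ ones in total, which forces exactly $(n-s)/2$ small jobs on each machine, so $x^{i^{\ast}}$ is a global optimum with makespan $1/2<f(\ell)$. The FCM mechanism therefore halts no later than step $i^{\ast}$ on some constructive mutation, and $\ell$ is escaped.

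The main obstacle is the joint control of the two events $A$ and $B$: they are coupled through the single random permutation driving the hypermutation, and no clean independence argument applies. The complement union bound $\prob{A\cap B}\geq \prob{A}-\prob{\bar B}$ works here only because Theorem~\ref{thm:hypmain} delivers an \emph{exponentially} small failure probability for $B$ while Lemma~\ref{lem:hypinterv} delivers only a constant success probability for $A$. A secondary technical step is to align the index windows produced by the two results, which is the reason for choosing $m=n(a'-c)/(2a'-c)$: with this choice $[m,n-m]$ coincides exactly with the range in Theorem~\ref{thm:hypmain}, so the step $i^{\ast}$ whose existence is guaranteed by $B$ automatically lies in the window on which $A$ controls the large-job assignment.
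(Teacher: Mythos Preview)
Your decomposition has a genuine accounting gap. Lemma~\ref{lem:local} bounds the time between consecutive local-optimum fitness levels by $2n^{2}$ expected \emph{iterations} of the \oneoneais, but the theorem asks for fitness function \emph{evaluations}, and a single iteration of the \oneoneais evaluates up to $n$ candidates. Converting units, each invocation of Lemma~\ref{lem:local} costs $O(n^{3})$ evaluations, so your total becomes $O(1)\cdot O(n^{3}) + O(1)\cdot O(n) = O(n^{3})$, not $O(n^{2})$. In the line ``$O(1)\cdot O(n^{2}) + O(1)\cdot O(n) = O(n^{2})$'' you are silently adding iterations and evaluations as if they were the same currency.

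The paper does not use Lemma~\ref{lem:local} here at all. It shows directly that \emph{every} suboptimal solution (not just every local optimum) has $\Omega(1)$ improvement probability per iteration, via the full dichotomy of Property~\ref{prop:levelwitt}.3. Your argument is precisely the paper's treatment of the second alternative ($\max(k,n-s-k)\geq(\tfrac{1}{2}+a)(n-s)$). For the first alternative ($k=\Omega(n)$ and $x\notin\mathcal{L}$), the paper simply observes that the very first bit flip of hypermutation lands on one of the $\Omega(n)$ movable small jobs on the fuller machine with probability $\Omega(n)/n=\Omega(1)$, which is already a constructive mutation. With $\Omega(1)$ improvement probability everywhere and only $O(n)$ distinct makespan values by Property~\ref{prop:levelwitt}.1, one gets $O(n)$ iterations and hence $O(n^{2})$ evaluations. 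To repair your proof, drop the appeal to Lemma~\ref{lem:local} and add this one-line argument for the $k=\Omega(n)$ case; the rest of your escape analysis then carries over unchanged.
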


\begin{proof}
According to Property~\ref{prop:levelwitt}.3, for any suboptimal solution $x$, either
there are at least a linear number of small jobs that can be moved from the fuller machine to the emptier machine to improve the makespan or the number of small jobs in each machine differs
from $\frac{n-s}{2}$ by a linear factor. 

For the former case, in the first mutation step the probability that hypermutation flips one of the bits which correspond to the small jobs in the fuller machine is at least $\Omega(n)/n=\Omega(1)$. Since such a mutation creates an improved solution, the hypermutation operator stops and the overall improvement probability is at least $\Omega(1)$.

For the latter case, the number of small jobs in one of the machines
is at least $\left(\frac{1}{2}+a\right)(n-s)$ for some positive constant $a$.
Using the notation of Theorem~\ref{thm:hypmain} with $a$ and $c=1/(a\log{n})$, 
with overwhelming 
probability the number of small jobs assigned to each machine will be exactly 
$(n-s)/2$ at some bit-flip between $(n-s) 
\frac{(a)-1/(a\log{n})}{(2/a)-1/(a\log{n})}=(n-s) 
\frac{1-(1/\log{n})}{2-1/\log{n}}$ and 
$(n-s)\frac{a}{(2 a)-1/(a\log{n})}=(n-s)\frac{1}{2-1/(\log{n})}$.  According to Lemma~\ref{lem:hypinterv}, the 
probability that the large jobs are distributed evenly throughout this
interval is at least:
\[
 \left(\frac{(n-s)\frac{1-(1/\log{n})}{2-(1/\log{n})}-s +1}{n- s + 1}\right)^{s}=2^{-s}
(1-o(1))=\Omega(1).
\]
%The machine with 
%more large jobs is $M2$. Assuming that there are $(s/2)+k$ large jobs on $M2$ 
%for some $k<s/2$, we will lower bound the probability that $k$ specific large 
%jobs are moved from $M2$ before the interval while $s-k$ remaining jobs stay put 
%until the interval ends. 
%Since, that a large job is moved between the $i$th and $i+1$th small jobs does 
%not increase or decrease the probability that another large job is also moved 
%between the $i$th and $i+1$th small jobs, the mutation step that any large job 
%is moved is independently and uniformly  distributed in between the $(n-s)$ 
%steps where the small jobs are moved. Thus, considering that the interval 
%stretches symmetrically around $n/2$ the probability that $k$ specific large 
%jobs are moved from $M2$ before the interval and $s-k$ remaining jobs stay put 
%until the interval ends is:
%\[
 %\left((n-s)\frac{1-(1/\log{n})}{2-1/\log{n}}\frac{1}{n-s}\right)^{s}=2^{-s}
%(1-o(1))=\Omega(1)
%\]
Therefore, with probability $\Omega(1)$ either the optimal solution will be 
found or the hypermutation will stop due to an improvement.   
Since according to Property~\ref{prop:levelwitt}.1 there are at most $O(n)$ 
different makespan values and the probability of improvement is at least $\Omega(1)$ for all sub-optimal solutions, the optimum is found in expected $O(n)$ iterations. Given that the hypermutation operator evaluates at most $n$ solutions in each iteration, our claim follows.
\end{proof}

Since the worst case instance for the \oneoneea presented in~\cite{Witt2005} is an instance 
of \WittGeneralisedInstance with $s=2$, the following corollary holds.
\begin{corollary}
The \oneoneais optimises $P_{\epsilon}^*$ in $O(n^2)$ expected fitness function evaluations. 
\end{corollary}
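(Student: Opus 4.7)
The plan is essentially to observe that the corollary is an immediate specialisation of the preceding theorem, so the main task is to verify that $P^*_{\epsilon}$ is indeed a member of the class \WittGeneralisedInstance for a legitimate choice of parameter. Concretely, I would instantiate Definition~\ref{def-instance} with $s=2$ and compare processing times: the formula for a large job gives $\frac{1}{2s-1}-\frac{\epsilon}{2s} = \frac{1}{3}-\frac{\epsilon}{4}$, which matches $p_1=p_2$ in $P^*_{\epsilon}$, and the formula for a small job gives $\frac{s-1}{n-s}\left(\frac{1}{2s-1}+\frac{\epsilon}{2(s-1)}\right) = \frac{1}{n-2}\left(\frac{1}{3}+\frac{\epsilon}{2}\right)$, which matches the small-job processing times in $P^*_{\epsilon}$. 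I would also note that $s=2=\Theta(1)$ as required, that $n$ and $s$ are even, and that the constraint $0<\epsilon<1/(2s-1)=1/3$ is satisfied since Witt's instance assumes $\epsilon$ is an arbitrarily small positive constant.

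Having established that $P^*_{\epsilon}$ lies in the class $G^*_{\epsilon}$, the corollary follows by directly invoking the previous theorem, which states that the \oneoneais optimises any instance of $G^*_{\epsilon}$ in $O(n^2)$ expected fitness function evaluations. No additional probabilistic argument is needed, since the bound in the previous theorem holds uniformly over the class and in particular applies to the single instance $P^*_{\epsilon}$.

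The only subtlety worth flagging is a verification step rather than an obstacle: one should confirm that the assumption $s=\Theta(1)$ used throughout the proof of the previous theorem (in particular in the application of Lemma~\ref{lem:hypinterv} to obtain a $2^{-s}(1-o(1))=\Omega(1)$ lower bound on the probability of distributing the large jobs evenly, and in Property~\ref{prop:levelwitt} to bound the number of distinct local-optimum makespan values by $O(1)$) holds for the specific value $s=2$, which it trivially does. Hence the proof of the corollary reduces to a one-line appeal to the theorem, and there is no genuine technical obstacle.
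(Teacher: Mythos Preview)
Your proposal is correct and matches the paper's approach exactly: the paper also obtains the corollary simply by noting that $P^*_{\epsilon}$ is the instance of \WittGeneralisedInstance with $s=2$ and invoking the preceding theorem. Your additional verification that the processing times agree and that the side conditions hold is more detailed than what the paper spells out, but the underlying argument is identical.
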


\subsection{Ageing}

In this section we will show that the \muoneeaageing  can optimise the
\WittGeneralisedInstance instances efficiently. Our approach to prove the following theorem is to first show 
that if a solution where the large jobs are equally distributed is sampled in 
the initialisation, then it takes over the population and the \muoneeaageing 
quickly finds the optimum. The contribution of ageing is considered afterwards 
to handle the case when no such initial solution is sampled. Then we will show 
that whenever the algorithm gets stuck at a local optima, it will reinitialise 
the whole population after $\tau$ iterations and sample a new population. 
Before we move to the main results, we will prove two helper lemmata. In the 
first lemma, we will closely follow the arguments of Lemma~5 in~\cite{OlivetoSudholt2014}
to get a similar but more precise bound.

\begin{lemma}\label{lem:ageto}
 Consider a population of the \muoneeaageing which consists of solutions of equal 
fitness. Given that no improvement is found and no solutions reach age $\tau$ 
before, the expected time until the population consists of solutions with the 
same age is at most $144\mu^3$.
\end{lemma}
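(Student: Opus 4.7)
The plan is to follow the strategy of Lemma~5 in~\cite{OlivetoSudholt2014} and analyse the age dynamics as a neutral-drift (Moran-style) process. Under the stated assumptions (equal fitness throughout the population, no aging-induced removals, and no improvements), each iteration reduces to the following after the universal $+1$ age-increment: a parent is selected uniformly at random from the $\mu$ individuals, its (now incremented) age is copied to the offspring because $f(y)\geq f(x)$, and then one of the $\mu+1$ individuals is removed uniformly at random, since all fitness values tie and the survivor selection is uniform among ties. Because the $+1$ is applied to every individual every generation, relative age differences are preserved, so reaching the target state ``all ages equal'' is exactly the coalescence event of this neutral process on the multiset of ages.

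My first step would be to fix the oldest age currently present and let $K_t$ denote the number of individuals sharing that maximum age at time $t$. The claim is that, as long as the oldest class survives, $K_t$ performs a fair random walk on $\{1,\dots,\mu\}$ with symmetric step probabilities $p_+=p_-=K_t(\mu-K_t)/(\mu(\mu+1))$: $K_t$ increases by one precisely when an individual of the oldest class is chosen as parent and a non-oldest individual is removed, and decreases by one in the symmetric case. Classical gambler's-ruin analysis for an unbiased walk then bounds the expected time until $K_t$ is absorbed in $\{1,\mu\}$ by $O(\mu^2)$, and the fixation probability of reaching $K_t=\mu$ equals $K_0/\mu$ by the martingale property.

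The main technical obstacle is that $K_t$ is not monotone over the entire run: when the lone survivor of the oldest class ($K_t=1$) is removed, the ``oldest age'' drops to the next present age and the chain restarts with some new $K_0'\geq 1$. To handle this I would iterate the one-epoch argument: each epoch finishes in expected $O(\mu^2)$ steps and reaches the coalesced state with probability at least $1/\mu$ (the worst case $K_0=1$). Thus the number of epochs needed until fixation is stochastically dominated by a geometric random variable with success probability at least $1/\mu$, and the total expected time is at most $O(\mu)\cdot O(\mu^2)=O(\mu^3)$. The explicit constant $144$ will follow by tracking the worst-case step probability, which is minimised near the boundaries where $p_\pm=(\mu-1)/(\mu(\mu+1))$, and by applying Markov's inequality to convert the gambler's-ruin expectation into a constant-probability bound per epoch, exactly as in Lemma~5 of~\cite{OlivetoSudholt2014}.
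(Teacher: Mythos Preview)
Your high-level plan is sound and reaches the correct $O(\mu^3)$ order, but it differs from the paper's argument in one structural choice and contains one oversight that affects the constant.

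\textbf{Difference in approach.} The paper does not track the \emph{oldest} age class; it tracks $X_t$, the size of the \emph{largest} age class. This choice sidesteps your epoch/restart machinery entirely: $X_t\geq 1$ always holds, and the paper shows $\Pr[X_{t+1}=X_t+1\mid X_t]\geq \Pr[X_{t+1}=X_t-1\mid X_t]$, so the process never needs to be restarted. The analysis then considers phases of $2\mu^2$ relevant steps (steps with $X_{t+1}\neq X_t$) and uses the binomial anti-concentration bound of Lemma~\ref{lem:binom} to show that in each phase the number of increases exceeds the number of decreases by at least $\mu$ with probability at least $1/8$; since $X_t\geq 1$, this suffices for $X_t=\mu$. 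Hence $16\mu^2$ expected relevant steps, each taking at most $9\mu$ expected generations, giving the constant $144$. Your gambler's-ruin-plus-epochs decomposition is a legitimate alternative and exposes the exact martingale symmetry of the oldest-class process, but extracting the specific constant $144$ from it would require a different calculation than the one you sketch.

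\textbf{Oversight.} You assert that after mutation ``all fitness values tie and the survivor selection is uniform among ties.'' The lemma only assumes that no \emph{improvement} occurs; SBM may still produce an offspring with strictly worse fitness, in which case the offspring is removed deterministically and no uniform tie-break takes place. This does not destroy your symmetry---the event ``offspring has equal fitness'' scales $p_+$ and $p_-$ by the same factor $p_B\geq(1-1/n)^n$---but it does slow the chain. Omitting it yields step probabilities that are too large and hence an expected time that is too small. The paper accounts for this by requiring ``SBM does not flip any bits'' in its probability bounds and absorbs the resulting $(1-1/n)^n\geq 1/3$ factor into the $9\mu$ waiting-time bound for a relevant step.
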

\begin{proof}
 Let $X_t$ denote the largest number of 
individuals in the population having the same age. While $X_t<\mu$, the number 
of individuals increases if one of the $X_t$ individuals is selected as parent, 
the SBM does not flip any bits and a solution from another age group is removed 
from the population.
\[\prob{X_{t+1}=X_t +1\mid X_t}\geq \frac{X_t(\mu-X_t)}{\mu (\mu+1)}\cdot 
\left(1-\frac{1}{n}\right)^{n}\]
Similarly, $X_t$ decreases only if a solution from another age group is cloned 
and one of the $X_t$ solutions are removed from the population.
\[\prob{X_{t+1}=X_t -1\mid X_t}\leq\frac{(\mu-X_t)X_t}{\mu (\mu+1)}\cdot 
\left(1-\frac{1}{n}\right)^{n} \leq \prob{X_{t+1}=X_t +1|X_t}\]

Let a \emph{relevant step} be any iteration where $X_t \neq 
X_{t+1}$. Then, the number of generations where $X_{t+1} \geq X_{t}$ in 
$2\mu^2$ relevant steps has a distribution that stochastically dominates the 
binomial distribution with parameters $2\mu^2$ and $1/2$.
Since for any random variable $X\sim Bin(2\mu^2,1/2)$, $\prob{X\geq 
\mu^2+\frac{1}{2}\sqrt{\mu^2}}\geq \frac{1}{8}$ (Lemma~\ref{lem:binom}), the 
probability that 
$X_t$ increases at least $\mu^2+\mu/2$ times in 
a phase of $2\mu^2$ relevant steps is at least $1/8$. Since $X_t\geq 1$, 
the expected number of relevant steps until $X_t=\mu$ is at most 
$(1/8)^{-1}2\mu^2=16\mu^2$. The probability of a relevant step is at least
\[\frac{(\mu-X_t)X_t}{\mu (\mu+1)}\cdot 
\left(1-\frac{1}{n}\right)^{n} \geq \frac{(\mu-1)}{\mu (\mu+1)}\cdot 
\left(1-\frac{1}{n}\right)^{n}\geq \frac{1}{3\mu}\cdot 
\frac{1}{3}=\frac{1}{9\mu}, \]
for any $\mu\geq 2$ and $n\geq 3$. Thus, the expected waiting time for a 
relevant step is at most $9 \mu$ and the total expected time until the 
population reaches the same age is at most $144 \mu^3$.
\end{proof}

\begin{lemma}\label{lem:restart}
 For any partition instance with $L$ local optima with different makespans, the 
expected time until 
the \muoneeaageing with $\mu=O(\log{n})$ and $\tau=O(n^{1+c})$ for any constant~$c$ either reinitialises the population or finds the optimum is at most
$L(2en^2+\tau)(1+o(1))$.
\end{lemma}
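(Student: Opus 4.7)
The plan is to partition the run into at most $L$ phases, each associated with the local-optimum level currently occupied by the best individual of the population, and to bound the expected time spent in each phase by $(2en^2+\tau)(1+o(1))$. I index phases so that phase $i$ (for $i=1,\ldots,L$) runs while the best solution $x\in P$ satisfies $f(\ell_{i-1})>f(x)\geq f(\ell_i)$, using the convention $f(\ell_0):=+\infty$. A phase ends as soon as either the best makespan drops strictly below $f(\ell_i)$ or the population is reinitialised. If the current best $x$ satisfies $f(x)>f(\ell_i)$, then $x$ is not locally optimal (the pairwise-distinct-makespan assumption and the non-increasing ordering of the $\ell_j$'s leave no local optimum with makespan in the open interval $(f(\ell_i),f(\ell_{i-1}))$), so part~2 of Lemma~\ref{lem:local} guarantees that within $2en^2(1+o(1))$ iterations and with probability $1-n^{-\Omega(n^{c/2})}$ a solution $y$ with $f(y)\leq f(\ell_i)$ is sampled, ending the phase.

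The remaining case is that the best individual coincides with $\ell_i$ itself. Here I would show that within $\tau(1+o(1))$ iterations either an improvement ends the phase or the population is reinitialised. First, by the takeover argument already used in the proof of Lemma~\ref{lem:local} part~2, within $e\mu\ln\mu\,(1+o(1))=O(\log n\log\log n)=o(\tau)$ expected iterations every individual of the population has fitness $f(\ell_i)$. Second, by Lemma~\ref{lem:ageto}, a further $O(\mu^3)=o(\tau)$ expected iterations suffice for the population to consist of solutions sharing a common age $a$. Third, as long as no improvement is found, the ages grow in lockstep by one per iteration, so after at most $\tau-a\leq \tau$ further iterations every individual simultaneously reaches age $\tau$ and is removed, triggering reinitialisation. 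The assumptions $\mu=O(\log n)$ and $\tau=\Omega(n^{1+c})$ make the takeover and age-convergence contributions $o(\tau)$, bounding the full delay by $\tau(1+o(1))$; if any improvement does occur during this window the phase simply ends sooner, which only helps us.

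Combining the two cases, each phase ends within $(2en^2+\tau)(1+o(1))$ expected iterations, where the super-polynomially small failure probability of Lemma~\ref{lem:local} part~2 adds only a negligible term even against a trivial polynomial worst-case bound. Summing over $i=1,\ldots,L$ yields the claimed $L(2en^2+\tau)(1+o(1))$. The main obstacle is the clean handling of the stuck-at-$\ell_i$ case: when the best solution first reaches $\ell_i$ the population typically still contains older, worse individuals, and one must verify that both takeover and age convergence finish in $o(\tau)$ time without the ageing mechanism itself disrupting the process (for instance by removing those worse individuals faster than the best can spread, leaving a partially empty population that is refilled with random solutions before uniform takeover completes). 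This is where the gap between $\tau=\Omega(n^{1+c})$ and the $O(\mathrm{poly}\log n)$ takeover and age-convergence bounds becomes essential, and it motivates the choice of $\mu=O(\log n)$ in the statement.
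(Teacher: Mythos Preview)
Your level-based phase decomposition is sound and takes a genuinely different route from the paper. The paper instead considers a single long phase of length $\Theta(L(n^2+\tau))$ and argues by case distinction: either every local optimum encountered during the phase is escaped within $\tau+O(\mu^4)$ steps (in which case Lemma~\ref{lem:local} and a count of the local-optimum levels show the optimum is reached within the phase with constant probability), or some local optimum is not escaped in that window (in which case the takeover argument plus Lemma~\ref{lem:ageto}, each converted to constant-probability statements via Markov's inequality, give reinitialisation with constant probability during that stuck interval). A geometric repetition over such long phases then yields the bound. Your additive decomposition into $L$ phases is cleaner and reads off the leading constant $L(2en^2+\tau)$ more transparently, but it puts more weight on exactly the obstacle you flag at the end: you need the \emph{expected} time for takeover plus age convergence to be $o(\tau)$ unconditionally, which forces you to handle the (low-probability) event that some individual hits age $\tau$ mid-process and violates the hypothesis of Lemma~\ref{lem:ageto}. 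The paper's constant-probability-per-phase framing sidesteps this delicacy. One further small gap in your write-up: Lemma~\ref{lem:local} part~2 gives a high-probability bound, not an expectation, and your appeal to ``a trivial polynomial worst-case bound'' on the failure event is not justified as stated; the clean fix is to re-apply Lemma~\ref{lem:local} on the super-polynomially rare failure event (the best solution is still non-locally-optimal, so the hypothesis continues to hold), yielding an expected $1+o(1)$ applications.
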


\begin{proof}
%  For $\ell_i\in\{0,1\}^n$, let $\mathcal{L}:=\{\ell_1, \ell_2, \ldots, 
% \ell_L\}$  denote the set of local optima.  W.l.o.g., we index $\ell_i$
% according to decreasing makespan, \emph{i.e.}, $f(\ell_1)\geq f(\ell_2) \geq 
% \cdots \geq f(\ell_L)$. 
Let $X_t$ denote the solution with the smallest makespan in the population at 
time~$t$.  
Starting from any current population, consider a phase of length 
$c'L(n^2+\tau)$ for some constant $c'>3e$. 
We will make a case distinction with respect to whether the following 
statement holds during the entirety of the phase.

 $$\forall t\enskip X_t\in\mathcal{L}:  \exists t'\leq 
t+\tau+145\mu^4 \text{ s.t. }f(X_{t'})<f(X_t)$$ 
The statement indicates that whenever a local optimum becomes the best solution 
in the population, a solution with a better makespan is found in at most 
$\tau+145\mu^4$ iterations.

Case 1: If the statement holds throughout the phase then, at 
most $L(\tau+144\mu^4)=O(L\cdot \tau)$ iterations satisfy $X_t=\ell_i$ for some 
$i$. Since the \muoneeaageing 
% initialised with any population 
with $f(\ell_i)> f(X_t) \geq 
f(\ell_{i+1})$ samples a solution $x$ with $f(x)\leq f(\ell_{i+1})$ in 
$2en^2(1+o(1))$ 
iterations  with probability at least $1- n^{-\Omega(n^c)}$   according to 
Lemma~\ref{lem:local}, by the union bound the total number of iterations 
during the phase where 
the current best solution is not a local optimum is at most $L\cdot 2en^2 \cdot 
(1+o(1))$ 
with probability at least $1- n^{-\Omega(n^c)}\cdot 2^n =1-o(1)$. Therefore, in $L\cdot 3en^2$ iterations the optimal solution is found 
with at least constant probability.
% (due to union bound and $L\geq 2^n$)

Case 2: If the statement does not hold for some $t$, then 
there exists a time interval of length $\tau + 145\mu^4$ during the phase where 
no 
improvements will occur and the current best individual is a local optimum. 
Given that there are $k$ individuals which share the best 
fitness value, with probability $(k/\mu)(1-1/n)^{n}\geq (k/\mu)(1-1/n)e^{-1}$, 
SBM creates a copy of one of these solutions. Since the fitness cannot be 
improved by our assumption, in at most $\sum_{k=1}^{\mu}(\mu/k)(1-1/n)^{-1}e 
=(1+o(1))e \mu \ln{\mu}$ expected time the population only consists of 
individuals with the best fitness. This occurs in less than $2 e \mu 
\ln{\mu}$ iterations with probability at least $1/3$ due to Markov's 
inequality. 
According to Lemma~\ref{lem:ageto}, unless the best fitness is 
improved, in at most $144\mu^3$ steps 
the population reaches the same age. Using Markov's 
inequality, we can bound the probability that 
this event happens in at most $145\mu^3$ iterations by $\Omega(1)$ from 
below. 
Thus,  given that the best fitness does not improve in $\tau + 145\mu^4$ 
iterations, in $2e\mu\ln{\mu}+145\mu^3$ iterations, the 
population consists of solutions with the same fitness and the same age with 
constant probability, which leads to a reinitialisation before the phase ends. 

If neither the optimal solution is found nor the population is reinitialised, we 
repeat the same argument starting from an arbitrary population. Since, the 
probability of at least one of the events happenning is constant, in a constant expected 
number of phases of length $L(\tau+145\mu^4)=L\cdot \tau (1+o(1))$, 
one of the events occurs. 
% 
% $\forall X_{t}=\ell_i$ $i\in[L]$ 
% 
% for 
% any $t$  $X_t=X_{t+1}=\cdots=X_{t+\tau+\mu^4}=\ell_i$ for some $i\in[L]$ during 
% this phase, 
%    
%    if the time spent two 
% improvements to the best fitness in the population is never larger than $\tau + 
% \mu^4$ until 
% the optimum is found, then the runtime is in the order of $O(n \tau)$ since 
% there are $O(n)$ different fitness levels and $\mu=O(\log{n})$. We will 
% now consider a phase of length $\tau + \mu^4$ after an improvement where no 
% other improvements will occur. 
% 
\end{proof}
{\color{black} 
% In order to prove the upper bound on the expected runtime, we 
% will enumerate a sequence of events $\mathcal{E}_i$ for $i\in[5]$, 
% such that $p_{success}:=\bigcap\limits_{i=1}^{5}\mathcal{E}_i=\Omega(1)$, 
% that 
We will now show that with probability at least $p_{success}=\Omega(1)$, the 
optimal solution will be found in $O(n^{3/5})$ iterations starting from a 
randomly initialised population.  
Since Lemma~\ref{lem:restart} and Property~\ref{prop:levelwitt}.2 imply that the 
expected time until either the optimum is found or the population is 
reinitialised is in the order of $O(n^2+\tau)$, in 
$O(n^{3/5})+O(n^2+\tau)/p_{success}=O(n^2+\tau)$ expected iterations 
the optimum is found.
% Then, 
%  the population either finds or it consists of $\mu$ solutions selected 
% % uniformly at  random from $\{0,1\}^n$.
% For the rest of the proof we will refer 
% to any solutions which assign equal numbers of large jobs to each machine as 
% a 
% \emph{good} solution and to the rest as \emph{bad} solutions. If all the 
% solutions in a population are sampled uniformly at random, we will refer to 
% it as an initial population and to each solution in it 
% as an initial solution, regardless of the time they are sampled.
}

\begin{restatable}{theorem}{thageinglocal} \label{th:ageingG}
The \muoneeaageing optimises the \WittGeneralisedInstance class of instances in 
$O( n^2+\tau)$ steps in expectation for $\tau=\Omega(n^{1+c})$ for any 
arbitrarily small constant $c$ and $\mu=O(\log{n})$. \end{restatable}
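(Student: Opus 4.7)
The plan has two main ingredients. First, I would apply Lemma \ref{lem:restart} together with Property \ref{prop:levelwitt}.2 to bound the expected length of each ``phase''---the time from an arbitrary current population until either the optimum is identified or the population is entirely reinitialised by ageing. Property \ref{prop:levelwitt}.2 gives $L=O(1)$ for any instance in \WittGeneralisedInstance, and combined with $\mu=O(\log n)$ and $\tau=\Omega(n^{1+c})$ this yields a bound of $L(2en^2+\tau)(1+o(1))=O(n^2+\tau)$ on the expected phase length.

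Second, I would show that a single phase starting from a uniformly random initial population finds the optimum with probability $p_{success}=\Omega(1)$. The key structural observation is that $s=\Theta(1)$, so a single random bitstring places exactly $s/2$ large jobs on each machine with probability $\binom{s}{s/2}2^{-s}=\Theta(1)$. Since $\mu\geq 1$ independent random individuals are sampled at initialisation, at least one ``good'' individual with balanced large jobs is present with constant probability, and a Chernoff bound further guarantees that its small jobs are split with imbalance at most $n^{3/5}$ with overwhelming probability. From such a starting point, the optimum is reached by at most $n^{3/5}$ strictly improving single-bit SBM moves, each transferring one small job from the fuller to the emptier machine. Each such move occurs in a single generation with probability $\Omega(1/n)$, while the probability of touching any large-job bit in the same mutation is only $O(s/n)=O(1/n)$. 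A standard fitness-level/waiting-time argument therefore bounds the expected time to reach the optimum from the good individual by $O(n\log n)$, which is $o(\tau)$, so ageing cannot disrupt the run.

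Combining the two ingredients via a geometric argument on restarts, the number of phases until one succeeds is stochastically dominated by a geometric random variable with constant parameter $p_{success}$, so the total expected runtime is at most $O(n^2+\tau)/p_{success}=O(n^2+\tau)$, which is the claimed bound.

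The principal technical obstacle will be making the second ingredient rigorous, namely arguing that the good initial individual---or one of its descendants preserving the balanced large-job configuration---survives long enough for the small-job sweep to finish. One must control that it is selected as parent sufficiently often under the \muoneeaageing dynamics (using a takeover-time analysis akin to the one inside the proof of Lemma \ref{lem:restart}), that the worst-fitness replacement rule does not evict it prematurely since it has minimum current fitness with high probability, and that over the polynomially many SBM steps no large-job bit is flipped in an accepted mutation, which a union bound over $o(\tau)$ steps each with per-step failure probability $O(1/n)$ handles comfortably.
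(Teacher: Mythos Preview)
Your plan matches the paper's proof in structure: both invoke Lemma~\ref{lem:restart} together with Property~\ref{prop:levelwitt}.2 to bound each phase by $O(n^2+\tau)$, both argue that a freshly initialised population contains a ``good'' individual (large jobs balanced) with probability $\binom{s}{s/2}2^{-s}=\Omega(1)$, both use a takeover argument to show good solutions dominate the population before any bad one can catch up, and both finish with a small-job sweep. So the approach is the right one.

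Two technical points in your sketch would fail as written. First, the per-step improvement probability during the sweep is $\Omega(1)$, not $\Omega(1/n)$: there are at least $(n-s)/2$ small jobs on the fuller machine, so moving any one of them (and nothing else) succeeds with probability at least $\frac{n-s}{2n}(1-1/n)^{n-1}=\Omega(1)$, giving expected time $O(n^{3/5})$ as in the paper. Your stated $O(n\log n)$ is in any case inconsistent with a $\Omega(1/n)$ rate, since $n^{3/5}$ improvements at that rate need $\Theta(n^{8/5})$ steps, which can exceed $\tau=\Omega(n^{1+c})$ when $c<3/5$. Second, your union bound ``$o(\tau)$ steps times $O(1/n)$ each'' gives $O(\log n)$, not $o(1)$, and therefore does not control the event you want. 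The paper's argument is not that large-job bits are rarely \emph{touched}, but that once the population consists of good solutions, any offspring produced by flipping a large-job bit has discrepancy $\Omega(1)$ versus $O(n^{-2/5})$ for the good parents; it is therefore the worst individual and is removed by selection unless $\Omega(n)$ small-job bits flip in the same mutation, a $n^{-\Omega(n)}$ event. It is this fitness-based rejection, combined with showing that bad initial individuals need $\Omega(n/\log n)$ steps to close the $\Omega(1)$ gap whereas takeover happens in $O(\mu\log\mu)$ steps, that keeps the large-job configuration intact---not a raw bit-flip union bound.
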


\begin{proof}

With probability $\binom{s}{s/2} 2^{-s}{\color{black}=\Omega(1)}$, a randomly 
initialised solution assigns an equal number of large jobs to each machine. We will 
refer to such solutions as 
\textit{good} solutions.  For any initialised individual, the expected number of 
small jobs assigned to each machine is $(n-s)/2$. By Chernoff bounds 
{\color{black}(Theorem~\ref{thm-cher})}, the probability that less or more than 
$(n/2) \pm n^{3/5}$ small jobs are assigned to one machine is 
$e^{-\Omega(n^{1/5})}$. We will now show that if there is at least one good 
individual at initialisation, then in  $O(\mu \log{\mu})$ generations, 
the whole population will consist of good solutions {\color{black} with at least 
constant probability}. 

For a \textit{bad} (not good) initial solution, the difference between the number 
of large jobs in $M_1$ and $M_2$ is 
at least $2$ by definition. {\color{black}With $1-e^{-\Omega(n^{1/5})}$ 
probability,} the initial number of small jobs assigned to each machine will not 
differ by more than $O(n^{3/5})$ {\color{black}(event $\mathcal{E}_2$)} . 
Thus, the difference between the 
loads of the machines is at least $\Omega(1)$
for a bad solution, while less than $O(n^{-2/5})$ for a good one, both with 
overwhelmingly high probabilities. In order for a bad solution to have a better 
fitness than a good one, it is necessary that at least $\Omega(n)$ small jobs 
are moved from the fuller machine to the emptier machine which takes at least 
$\Omega(n/\log{n})$ time with overwhelmingly high probability since the 
probability of SBM flipping more than $\log{n}$ bits is exponentially small. 

If there are $k$ good solutions in the population, the probability that one of 
them is selected as parent is $k/\mu$ 
and the probability that SBM does not flip any bits and yields a copy is 
$(1-(1/n))^n>1/3$. Thus, the expected time until the good solutions take over 
the population is at most $3\mu \sum_{k=1}^{\mu}(1/k)\leq 3\mu \ln{\mu}$. The 
probability that the good solutions take over the population in at most 
$O(n^{3/5})$ iterations (before any bad 
solution is improved to a fitness value higher than the initial fitness of a 
good solution) is by Markov's inequality in the order of 
${\color{black}1-}O((\log{n})(\log{\log{n}})/n^{3/5})$. Thus, 
if the initial population contains at least one good solution, it takes over the 
population with probability $1-o(1)$. Moreover, since the discrepancy (the 
difference between the loads of the machines) of the accepted good solutions 
will not increase above $\Omega(n^{-2/5})$, SBM needs to move a linear number of 
small jobs in a single iteration to create a bad solution from a good one, which 
does not happen with probability at least $1-n^{-\Omega(n)}$. 

 Starting from a suboptimal good solution, the fitness improves by moving any 
small job from the fuller machine to the emptier machine without 
{\color{black}moving} any of the large jobs. The probability of such an event is 
at least $(n-s)/2n \cdot 
(1-1/n)^{n-1}= \Omega(1)$ considering that there are at least $(n-s)/2n$ small 
jobs on the fuller machine. Since the difference between the number of small 
jobs is in the order $O(n^{3/5})$, the small jobs will be distributed exactly 
evenly and the global optimum will be reached in expected $O(n^{3/5})$ 
iterations. 
{\color{black}Using Markov's inequality, we can bound the probability that 
global optimum will be reached in $O(n^{3/5})$ iterations by $\Omega(1)$}. 

{Since all necessary events described for finding the optimum have at least probability $\Omega(1)$, the probability that 
they occur in the same run is at least in the order of $\Omega(1)$ as well. 
Thus, all that remains is to find the expected time until the 
population consists of $\mu$ randomly initialised individuals given that one of the 
necessary events fails to occur and the global optimum is not found. This 
expectation is at most $L\cdot (2en^2+\tau) (1+o(1))$ due to
Lemma~\ref{lem:restart} with $L=\Omega(1)$ which is in turn due to 
Property~\ref{prop:levelwitt}.2. Thus, our expectation is 
$(1/p_{success})\cdot O(n^2+\tau) + O(n^{3/5})=O(n^2+\tau) $.}
\end{proof}

%[change below for static ageing and for ($\mu+1$ instead of 1+1:]

%If the algorithm always re-initialise from a local optima, then \mbox{\oneoneeaageing} can still find the global optimum by escaping from the locally optimal points, something that \oneoneea using restarts cannot do. W.o.p, the locally optimal individual does not improve and ages during each generation. When the age reaches the limit $\tau$, an offspring is created which separates the large jobs with probability at least $s/n \cdot (1-1/n)^{n-1}$. At the end of such generation, the parent (which ages at least $\tau$) dies with probability $1/2$ and the offspring (which also ages $\tau$) survives with the same probability, i.e., $1/2$. Now the offspring needs to stay safe by improving the fitness in the next step, which happens with probability at least $\Omega(n)/n \cdot (1-1/n)^{n-1}$ since there are a linear number of small jobs. As shown in Theorem 7.5 of \cite{Witt2005} (should be added to preliminaries) by using the Expected Multiplicative Distance Decrease method and under the assumption that the large jobs do not get mutated in a phase of length $cn$ (which happens with probability $(1-s/n)^{cn}$), in expected $O(n)$ steps enough small jobs will be moved to the emptier machine until both machine loads are equal. Therefore, the total time to reach the optimum is $O(n^2)$ according to Lemma 7.1 in \cite{Witt2005} (should be added in preliminaries).

Clearly the following corollary holds as $P_{\epsilon}^*$ is an instance of 
\WittGeneralisedInstance.
\begin{corollary}
The  \muoneeaageing optimises $P_{\epsilon}^*$ in $O(\mu n^2+\tau)$ steps in 
expectation for $\tau=\Omega(n^{{\color{black}1+c}})$ and $\mu=O(\log n)$. 
\end{corollary}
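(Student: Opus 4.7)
The plan is to observe that this corollary is an immediate specialisation of Theorem~\ref{th:ageingG}. The instance $P_{\epsilon}^*$ from \cite{Witt2005} is constructed with exactly two large jobs (of processing time $1/3-\epsilon/4$) and $n-2$ small jobs (of processing time $(1/3+\epsilon/2)/(n-2)$), normalised so that the total load equals $1$. Comparing these weights with Definition~\ref{def-instance} (setting $s=2$), one directly verifies that $p_1=p_2=\tfrac{1}{2s-1}-\tfrac{\epsilon}{2s}=\tfrac{1}{3}-\tfrac{\epsilon}{4}$ and that the small job weights coincide as well. Hence $P_{\epsilon}^*$ belongs to the class \WittGeneralisedInstance with the constant parameter $s=2=\Theta(1)$.

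Since Theorem~\ref{th:ageingG} applies to every instance of \WittGeneralisedInstance with $\tau=\Omega(n^{1+c})$ for an arbitrarily small constant $c$ and $\mu=O(\log n)$, and yields an expected runtime of $O(n^{2}+\tau)$, we may invoke it directly on $P_{\epsilon}^*$. The stated bound $O(\mu n^{2}+\tau)$ in the corollary is a weaker upper bound (since $\mu=O(\log n)\geq 1$), so it follows immediately from $O(n^{2}+\tau)=O(\mu n^{2}+\tau)$.

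There is no genuine obstacle here: the only thing to check is that Definition~\ref{def-instance} truly subsumes Witt's instance for $s=2$, which is a straightforward substitution, and that the parameter conditions on $\tau$ and $\mu$ assumed by Theorem~\ref{th:ageingG} are exactly those stated in the corollary. Once these two points are verified, the corollary follows with no additional argument.
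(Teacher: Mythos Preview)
Your proposal is correct and matches the paper's own justification exactly: the paper simply states that the corollary holds because $P_{\epsilon}^*$ is an instance of \WittGeneralisedInstance (with $s=2$), and you have spelled out this observation in slightly more detail, including the remark that the stated $O(\mu n^2+\tau)$ bound is weaker than the $O(n^2+\tau)$ delivered by Theorem~\ref{th:ageingG}.
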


\section{$(1+\epsilon)$ Approximation Ratios} \label{sec:approx}
In this section we will show that both the \oneoneais and the \muoneeaageing are 
polynomial time approximation schemes, i.e., they guarantee solutions 
with makespan at most a $(1+\epsilon)$ factor worse than the optimal solution 
in expected time polynomial in the number of jobs $n$ and exponential only in 
$1/\epsilon$. Throughout this section we will refer to the largest $s:=\lceil 
2/\epsilon\rceil -1 \leq n/2$ jobs of the instance as \emph{large jobs} and the 
rest as \emph{small jobs}. Before we give the main results of the section, we 
present the properties of the \partition problem which will be used in the 
proofs.

\begin{property}\label{prop:approx}
 Let $I$ be any instance of the \partition problem with
%  $n$ jobs of weight $p_1\geq p_2 \geq \ldots \geq p_n$, 
 optimal makespan value 
$y^*$ and the set of local optima $\mathcal{L}$. Then,
% :=\{\ell_1, \ell_2, \ldots \}$ 
% indexed such that $f(\ell_i)\geq f(\ell_{i+1})$ for all $i \in 
% [|\mathcal{L}|-1]$. 
\begin{enumerate}
 \item For all $j\in\{s+1,\ldots,n\}$, $p_{j}\leq \frac{\epsilon}{2}
\sum_{i=1}^{n}p_i$.
  \item If there is at least one small job assigned to the fuller machine of a 
solution $x\in \mathcal{L}$, then $x$ is a $(1+\epsilon)$ approximation.
\item If $\sum_{i=s+1}^n p_i \geq \frac{1}{2} \sum_{i=1}^n p_i$, then all $x\in 
\mathcal{L}$ are $(1+\epsilon)$ approximations.
\item $|\{y\in ((1+\epsilon)\cdot y^*,\infty):\exists x\in \mathcal{L} \;\;\text{s.t.} \;\; 
f(x)=y  \}|\leq 2^{2/\epsilon}$.
\end{enumerate}
\end{property}
\begin{proof}
 Let $W:=\sum_{i=1}^{n}p_i$ denote the sum of the processing times of all 
jobs. We will prove the first statement by contradiction. Let us assume that 
$p_{s+1}>\frac{\epsilon}{2}W$. Thus, all $p_i$ for $i\in\{1,\ldots,s+1\}$ would 
satisfy $p_{i}>\frac{\epsilon}{2}W$, which implies $\sum_{i=1}^{s+1}p_i> 
\lceil\frac{2}{\epsilon}\rceil\frac{\epsilon}{2}W>W$, a contradiction.

The second statement follows from the first one, since if moving a small job 
from the fuller machine to the emptier one does not improve the makespan, 
then the makespan is at most $W/2+(1/2)(\epsilon/2)W$. A
trivial lower bound on $y^*$ is $W/2$, therefore the approximation ratio is at 
most $\left(\frac{W}{2}+\frac{1}{2}\frac{\epsilon}{2}W\right)/(W/2)\leq 
(1+\epsilon)$.

For the third statement, if $\sum_{i=s+1}^n p_i \geq \frac{1}{2} \sum_{i=1}^n 
p_i$, then any local optimum must have some small jobs on the fuller machine 
since the sum of the processing times of large jobs is smaller than the sum 
of the processing times of the small jobs. Thus, the claim follows from the 
second statement.

The last statement indicates that there are at most $2^{\epsilon/2}$ 
distinct makespan values among local optima which are not 
$(1+\epsilon)$ approximations. The second statement implies that if a local optimum is not a $(1+\epsilon)$ approximation, then it cannot have a small job on its fuller machine. Thus, in such a solution the makespan is exclusively determined by the configuration of the large jobs on the fuller machine. Since there 
are at most $\frac{\epsilon}{2}+1$ large jobs, the number of different 
configurations is $2^{\frac{\epsilon}{2}+1}$. Since the complement of each 
configuration has the same makespan, the statement follows.
\end{proof}

\subsection{Hypermutations}

% In this subsection
Theorem~\ref{thm:approxhyp} shows that the \oneoneais can efficiently find 
arbitrarily good constant approximations to any \partition instance.  
Before we state our main result, we will introduce the following helper 
lemma. 
% 
% \begin{theorem}\label{thm:hoeff}
%  Let $X_1,\ldots, X_n$ be independent or negatively correlated random 
% variables. Assume that each $X_i$ takes values in a real interval $[a_i,b_i]$ 
% of length $c_i=b_i-a_i$. Let $X=\sum_{i=1}^n X_i$. Then for all $\lambda>0$,
% \begin{align*}
%  \prob{X\geq E[X]+\lambda}&\leq 
% \emph{exp}\left(-\frac{2\lambda^2}{\sum_{i=1}^n c_{i}^2}\right),\\
% \prob{X\leq E[X]-\lambda}&\leq 
% \emph{exp}\left(-\frac{2\lambda^2}{\sum_{i=1}^n c_{i}^2}\right).
% \end{align*}
% 
% \end{theorem}
% 

\begin{restatable}{lemma}{funcflow}\label{lem:funcflow}
{\color{black}Let $x^{i} \in \{0,1 \}^{n}$ be the $i_{th}$ bitstring sampled by 
the hypermutation operator with the input bitstring $0^n$. For any 
arbitrary subset of indices $S\subset [n]$ and a function  $f(x):= \sum_{j \in 
S} x_j w_j $  with a set of non-negative weights $w_j$, $E[f(x^{i})]=
\frac{i}{n}\sum_{j \in S}  w_j $.} \end{restatable}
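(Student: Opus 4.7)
The plan is to exploit linearity of expectation together with a simple symmetry argument about the order in which the hypermutation operator flips bits.

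First I would observe that because the input bitstring is $0^n$, any bit that is flipped during the hypermutation changes from $0$ to $1$, and no bit is flipped twice within a single hypermutation call (since the operator samples without replacement from $F=\{1,\ldots,n\}$). Consequently, after the $i$-th mutation step, the sampled string $x^i$ has exactly $i$ ones, located at the $i$ distinct indices that have been picked so far. These $i$ indices form a uniformly random size-$i$ subset of $[n]$, because the operator picks the flip order uniformly at random from all permutations of $[n]$.

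Next I would compute $E[x^i_j]$ for a fixed index $j \in S$. By the symmetry of a uniformly random permutation, the probability that $j$ appears among the first $i$ flipped indices is exactly $i/n$. Hence $E[x^i_j] = \Pr\{x^i_j = 1\} = i/n$.

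Finally, applying linearity of expectation to $f(x^i) = \sum_{j \in S} w_j\, x^i_j$ yields
\begin{equation*}
E[f(x^i)] \;=\; \sum_{j \in S} w_j \, E[x^i_j] \;=\; \sum_{j \in S} w_j \cdot \frac{i}{n} \;=\; \frac{i}{n} \sum_{j \in S} w_j,
\end{equation*}
which is the claim. There is no real obstacle here; the statement reduces to the elementary fact that a uniformly random ordering places any fixed element among the first $i$ positions with probability $i/n$, and non-negativity of the weights is not even needed for the expectation computation (it only matters if one wished to derive tail bounds later).
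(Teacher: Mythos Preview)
Your proof is correct and takes essentially the same approach as the paper: both arguments apply linearity of expectation and use that $\Pr\{x^i_j=1\}=i/n$ by the uniform random flip order. Your added remark that non-negativity of the weights is unnecessary for the expectation identity is accurate and slightly sharpens the presentation.
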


\begin{proof}
By linearity of expectation, $E[f(x^i)]= \sum_{j \in S}E[x_{j}^{i}] w_j $, 
where $x_{j}^{i}$ is the $j$th bit position of the $i$th bitstring sampled by 
the hypermutation operator. Considering that the initial bit value is  zero, 
$x_{j}^{i}=1$  only if its flip position is before $i$ (with probability $i/n$). 
Thus the expectation  
$E[x_{j}^{i}]$ is $i/n$ for all $j$ and the claim follows from moving the 
multiplicative factor of $i/n$ out of the sum in the expected function value 
$\sum_{j \in S} (i/n) w_j $. 
\end{proof}
We now state the main result of this section.
\begin{restatable}{theorem}{mainresulthyp}\label{thm:approxhyp}
The \oneoneais finds a $(1+\epsilon)$ approximation to any instance of 
\partition in at most
%\[
$n(\epsilon ^{-(2/\epsilon)-1})(1-\epsilon)^{-2} e^{3} 2^{2/\epsilon} + 2n^3 
2^{2/\epsilon} +2 n^3$
%\]  
fitness function evaluations in expectation for any 
$\epsilon=\omega(n^{-1/2})$. 
\end{restatable}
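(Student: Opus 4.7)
The plan is to decompose the run of the \oneoneais into at most $L+1\leq 2^{2/\epsilon}+1$ \emph{descent} segments separated by at most $L\leq 2^{2/\epsilon}$ \emph{escape} attempts, where $L$ denotes the number of distinct makespan values attained by local optima whose makespan exceeds $(1+\epsilon)y^{\ast}$; the bound $L\leq 2^{2/\epsilon}$ is Property~\ref{prop:approx}.4. Index these bad local optima $\ell_{1},\ldots,\ell_{L}$ by decreasing makespan. The run terminates successfully the first time a solution of makespan at most $(1+\epsilon)y^{\ast}$ is produced, so we never have to continue past $\ell_{L}$. For the descent segments: whenever the current solution has makespan strictly between two consecutive such levels (or below $f(\ell_{L})$) and is itself not a bad local optimum, Lemma~\ref{lem:local}.1 bounds the expected number of iterations needed to reach a solution of makespan at most $f(\ell_{i+1})$ by $2n^{2}$, i.e.\ at most $2n^{3}$ function evaluations per descent; summing over $L+1$ descents contributes exactly the $2n^{3}\cdot 2^{2/\epsilon}+2n^{3}$ term in the statement.

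The heart of the proof is bounding the expected cost of a single escape from a bad local optimum $\ell_{i}$. Because $\ell_{i}$ is not a $(1+\epsilon)$-approximation, Property~\ref{prop:approx}.2 forces every small job onto the emptier machine, so no single bit flip can improve $\ell_{i}$ and the escape must be driven by the hypermutation. I would fix an assignment $T^{\ast}$ of the $s=\lceil 2/\epsilon\rceil-1$ large jobs consistent with some optimal partition, and argue that a single hypermutation call from $\ell_{i}$ succeeds --- in the sense of sampling a $(1+\epsilon)$-approximation or at least a strict improvement over $\ell_{i}$ --- with probability at least $p=\Omega\!\left(\epsilon^{2/\epsilon+1}(1-\epsilon)^{2}/e^{3}\right)$. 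The key ingredients are: (i) Lemma~\ref{lem:hypinterv} applied to the large-job indices (with $m=\Theta(\epsilon n)$) gives probability at least $\bigl(\tfrac{m-s+1}{n-s+1}\bigr)^{s}=\Omega(\epsilon^{2/\epsilon})$ that the large-job assignment equals $T^{\ast}$ throughout the entire window $[m,n-m]$; (ii) conditional on (i), Lemma~\ref{lem:funcflow} gives that the expected small-job weight placed on one machine at intermediate step $j$ is $(j/n)\sum_{k>s}p_{k}$, which sweeps linearly through $W_{\mathrm{small}}/2$ as $j$ ranges over $[m,n-m]$; and (iii) the Chernoff bound of Theorem~\ref{thm-cher} on the weighted hypergeometric sum --- whose summands are bounded by $(\epsilon/2)W$ by Property~\ref{prop:approx}.1 and whose concentration is effective because $\epsilon=\omega(n^{-1/2})$ implies $\epsilon^{2}n\to\infty$ --- yields that for some $j^{\ast}$ in the window the small-job load on each machine deviates from $W_{\mathrm{small}}/2$ by at most $(\epsilon/2)W$, so the resulting sample is a $(1+\epsilon)$-approximation and hence strictly better than $\ell_{i}$. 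The extra factors $(1-\epsilon)^{-2}$ and $e^{3}$ arise respectively from the Chernoff tail tuned to the slack $(\epsilon/2)W$ and from the correction for restricting to the central window and the FCM stopping rule. Since each hypermutation costs at most $n$ evaluations, escaping any fixed $\ell_{i}$ takes at most $n/p=n\epsilon^{-(2/\epsilon)-1}(1-\epsilon)^{-2}e^{3}$ evaluations in expectation; summing over the at most $2^{2/\epsilon}$ bad local optima produces the first term of the theorem.

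The main obstacle is that the hypermutation halts at the first constructive sample, so the Chernoff-plus-window analysis cannot be applied at a single predetermined step $j^{\ast}$. One must argue that the \emph{first} sample in the window that either strictly improves $\ell_{i}$ or is itself a $(1+\epsilon)$-approximation does carry the targeted large-job assignment and a near-balanced small-job distribution with the quoted probability, which requires uniform control of the weighted hypergeometric small-job sum across $\Theta(n)$ intermediate steps rather than pointwise, and a careful accounting of the probability loss when conditioning on the large-job window event. The hypothesis $\epsilon=\omega(n^{-1/2})$ is precisely what keeps the $\Theta(\sqrt{n})$-scale Chernoff deviation on the small-job weight asymptotically negligible compared to the $\Theta(\epsilon n)$-scale balance target tolerated by Property~\ref{prop:approx}.1.
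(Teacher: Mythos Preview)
Your high-level decomposition into descent segments (handled by Lemma~\ref{lem:local}.1) and escape attempts from bad local optima (counted by Property~\ref{prop:approx}.4) matches the paper exactly and produces the two trailing terms. The gap lies in the escape analysis.

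The paper's escape argument is much simpler than what you sketch. First, it targets not the large-job configuration of an optimal partition but the configuration $\confx$ minimising the makespan $\confy$ of the large jobs \emph{alone}; this gives $\confy\le y^*$ directly. Second---and this is the simplification you miss---Property~\ref{prop:approx}.2 guarantees that at a bad local optimum \emph{all} small jobs already sit on the emptier machine, so there is no balancing to do: one only has to bound the total small-job weight that has migrated to the fuller machine by a \emph{single fixed step} $m=n(\epsilon-\epsilon^2)$. Lemma~\ref{lem:funcflow} gives this expectation as at most $(\epsilon-\epsilon^2)W/2$ (using $W_{\text{small}}<W/2$, which is the only case that matters by Property~\ref{prop:approx}.3), and a direct application of \emph{Markov's inequality}---not Chernoff---shows it is at most $\epsilon W/2$ with probability at least $\epsilon$. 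Together with the large-job window event from Lemma~\ref{lem:hypinterv} this yields $p_{\approx}\ge \epsilon^{(2/\epsilon)+1}(1-\epsilon)^2/e^3$; the factors $(1-\epsilon)^{-2}$ and $e^3$ are pure algebra from rewriting $(\epsilon-\epsilon^2)^{2/\epsilon}e^{-1}\cdot\epsilon$, not artefacts of concentration tails or FCM corrections.

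This single-step view also dissolves the obstacle you flag. Because the argument examines only step $m$, FCM is harmless: if the operator halts before $m$ an improvement was found and we leave this makespan level; if it reaches $m$, the sample there is a $(1+\epsilon)$-approximation with the stated probability. No uniform control across a window of $\Theta(n)$ steps is needed. Correspondingly, the hypothesis $\epsilon=\omega(n^{-1/2})$ is \emph{not} used for small-job concentration but solely to ensure $m=n(\epsilon-\epsilon^2)>s\approx 2/\epsilon$, so that Lemma~\ref{lem:hypinterv} gives a nontrivial bound. Finally, note that Theorem~\ref{thm-cher} as stated covers only unweighted binomial or hypergeometric variables, not the weighted sum $\sum_{j>s}p_j X_j$ you invoke in step~(iii), so even if you pursued the more elaborate route you would need a different concentration inequality.
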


\begin{proof}
Let $X_t$ denote the current solution of the $(1+1)$~IA$^{hyp}$. We will assume that the 
algorithm stops as soon as it finds a $(1+\epsilon)$ approximation. The 
expected number of iterations where $X_t \notin \mathcal{L}$ is at most 
$2n^2(2^{2/\epsilon}+1)$ due to Property~\ref{prop:approx}.4 and 
Lemma~\ref{lem:local}.1. Moreover, due to Property~\ref{prop:approx}.3 and 
Lemma~\ref{lem:local}, an approximation is found in expected $2n^2$ iterations if 
$\sum_{i=s+1}^n p_i \geq \frac{1}{2} \sum_{i=1}^n p_i$. Therefore, for the rest 
of the proof we will assume that $\sum_{i=s+1}^n p_i < \frac{1}{2} \sum_{i=1}^n 
p_i$.

Now, we will bound the expected number of iterations where $X_t\in 
\mathcal{L}$.
Let $\confx$ denote the configuration of only the large jobs with 
the minimum makespan $\confy$. 
Note that this configuration might be different than the configuration of the 
large jobs in the optimal solution since it does not take the contribution of 
the small jobs to the makespan into account. However, $\confy \leq y^*$ since 
introducing the small jobs cannot improve the optimal makespan. 
% Consider the optimal configuration of large jobs {\color{green}$y^{*}:=y_{2^s}$ 
% } and 
% denote its makespan as $L$. 
Since both $\confx$  and its complementary 
bitstring have the same makespan, w.l.o.g., we will assume that the 
fuller machines of $X_t$ and $\confx$ are both $M_1$. 

According 
to Lemma~\ref{lem:hypinterv}, the $s\leq 2/\epsilon$ large jobs are assigned in
the same way as in $\confx$ between the $n(\epsilon-\epsilon^2)$th and 
$n-n(\epsilon-\epsilon^2)$th bit-flips with probability at least 
$\left(\epsilon-\epsilon^2 \right)^{2/\epsilon}e^{-1}$.
% \begin{align*}
% \left(\frac { 
% n(\epsilon-\epsilon^2)-\frac{2}{\epsilon}+1}{n-\frac{2}{\epsilon
% } +1}\right)^{\frac{2}{\epsilon}}&= \frac{\left(\epsilon-\epsilon^2 
% \right)^{2/\epsilon}}{\left(1-o(n^{-1/2}) \right)^{-2/\epsilon}}  &\geq 
% \frac{\left(\epsilon-\epsilon^2 \right)^{2/\epsilon} }{e},
% \end{align*}
% where we used the condition $\epsilon=\omega(n^{-1/2})$ to obtain the second 
% and the third expressions.
Due to Property~\ref{prop:approx}.2, $X_t$ 
does not assign any small jobs to $M_1$. 
Since $\sum_{i=s+1}^n p_i\leq W/2$, by 
the $n(\epsilon-\epsilon^2)$th bit-flip the expected total processing time of 
small jobs moved from $M_2$ to $M_1$ is at most $(\epsilon-\epsilon^2)W/2$ 
according to Lemma~\ref{lem:funcflow}. 
Due to Markov's inequality, with 
probability at least $1- ((\epsilon-\epsilon^2)W/(\epsilon)W)=\epsilon$, {\color{black}the 
sum of the processing times of the moved jobs} is {\color{black}at most} $\epsilon W /2$ and the makespan of the 
solution is at most $\confy+ \epsilon W /2$. Since $y^* \geq \confy$ and $y^*\geq 
W/2$, $(\confy+ \epsilon W /2)/y^* \leq (1+\epsilon)$. This 
implies that with probability 
\begin{align*}
p_{\approx} &\geq \frac{(\epsilon-\epsilon^2)^{2/\epsilon}}{e}\epsilon = 
\frac{\epsilon ^{(2/\epsilon)+1}}{e(1-\epsilon)^{-2/\epsilon}}
%\end{align*}   
%\begin{align*}
   =\frac{\epsilon 
^{(2/\epsilon)+1}(1-\epsilon)^{2}}{e(1-\epsilon)^{-((1/\epsilon)-1)2}} \geq 
\frac{\epsilon ^{(2/\epsilon)+1}(1-\epsilon)^2 }{e^{3}},
\end{align*}
a $(1+\epsilon)$ approximation is found unless an improvement is obtained before. 
Therefore, due to Property~\ref{prop:approx}.4, the expected number of 
iterations such that $X_t\in \mathcal{L}$ is at most $(1/\epsilon 
^{(2/\epsilon)+1})(1-\epsilon)^{-2} e^{3} 2^{2/\epsilon}$. Considering that 
hypermutation evaluates at most $n$ solutions per iteration, the expected 
number of fitness evaluations before a $(1+\epsilon)$ approximation is found is 
at most $n(\epsilon ^{-(2/\epsilon)-1})(1-\epsilon)^{-2} e^{3} 2^{2/\epsilon} + 2n^3 
2^{2/\epsilon} + 2n^3.$
\end{proof}
{\color{black}
In \cite{Witt2005} it is shown that the \oneoneea  achieves the same approximation ratio 
% for any constant $\epsilon$ 
as the \oneoneais while the leading term in the upper bound on its expected runtime is $n\log{(1/\epsilon)}2^{2/\epsilon}$. 
% rather than $n\epsilon^{-2/\epsilon}2^{2/\epsilon}+2^{2/\epsilon}n^3$. 
% lower leading constant of ${2^{(e \log{e}+e)\lceil 2/\epsilon \rceil \ln(4/\epsilon)+O(1/\epsilon)}}$ 
% on its runtime.
Although this result shows the power of EAs in general, for the \oneoneea to achieve this, a problem-specific restart schedule (where each run takes $O(n\log{(1/\epsilon)})$ expected iterations) has to be decided in advance where either the length of each run or the number of parallel runs depends on the desired approximation ratio. Consequently, such an algorithm loses application generality (e.g., if the ideal restart schedule for any constant $\epsilon$ is used, it would fail to optimise efficiently even \textsc{OneMax} or \textsc{LeadingOnes} since optimising these functions require $\Omega(n\log{n})$ and $\Omega(n^2)$ expected time respectively). In any case, if knowledge that the tackled problem is \partition was available, then using the \oneoneea would not be ideal in the first place.
On the other hand, the AIS does not require knowledge that the tackled problem is \partition, neither that a decision on the desired approximation ratio is made in advance for it to be reached efficiently. }

%\begin{proof}

%The probability of large jobs being assigned equally in number to both machines is $\binom{s}{s/2} 2^{-s}$.  As there might be some large jobs with the same weight on one machine, with probability $\binom{s}{s/2} 2^{-s}-2\cdot 2^{-s}$ at most $s/2$ moves are needed to switch the large jobs and put them in the \textit{right} place regarding their contribution to the global optimum. Each switch of two large jobs takes $O(n^2)$ steps in expectation. Considering that there are also equal number of small jobs in both machines after initialisation, such switches improve the fitness (do they? should be some conditions). After at most $s/2 \cdot O(n^2)$ steps in expectation, the large jobs will be in their \textit{right} places. Then, the optimum will be found in $O(n^2)$ in expectation.

%Without \textit{right} distribution of the large jobs in the initialisation, which happens with probability $1-\binom{s}{s/2} 2^{-s}$,  in expected $\Theta(\tau)+O(n^2)$ steps the algorithm will fall into a local optimum. In such case, after $\tau$ steps, the individual will die and a new solution will be initialised uniformly at random. 

%Since in expectation after $\binom{s}{s/2} 2^{-s}$ re-initialisations the algorithm starts from a solution with the same number of large jobs in both machines, the total expected time to find the global optimum will be

%\begin{equation*}
%\left(\binom{s}{s/2} \cdot 2^{-s} \right)^{-1} \cdot O(n^2+\tau) + O((s/2 \cdot n^2)+n^2)= O(n^2).
%\end{equation*}
 
%\end{proof}

\subsection{Ageing}

% In this subsection 

The following theorem shows that the \oneoneeaageing can find $(1+\epsilon)$ 
approximations. 
The proof follows the same ideas used to prove that RLS and the \oneoneea 
achieve a $(1+\epsilon)$ approximation if an appropriate restart strategy is put 
in place~\cite{Witt2005,NeumannWitt2010}. 
{\color{black} 
The correct restart strategy which allows RLS and the \oneoneea to approximate 
\partition requires a fair understanding of the problem. For instance, as discussed at the end of the previous section, for 
\partition, restarts have to happen roughly every $n \ln{(1/\epsilon)}$ 
iterations to achieve the smallest upper bound. However, the same restart strategy can cause exponential runtimes for 
easy problems if the expected runtime is in the order of $\Omega(n^c)$. Our 
parameter choices for the ageing operator are easier in the sense that no (or 
very limited) problem knowledge is required to decide upon its appropriate value 
and the choice mostly depends on the variation operator of the algorithm 
rather than on the problem at hand. For example, if the ageing operator is applied 
with RLS, a value of $\tau=n^{1+c}$ guarantees that with overwhelming probability 
no restart will ever happen unless the algorithm is trapped on a local optimum 
independent of the optimisation problem. If RLS$_{1,2, …, k}$ was applied, then a 
value of $\tau=n^{k+c}$ allows to draw the same conclusions. Concerning the 
\oneoneea, once the user has decided on their definition of local optimum 
(i.e., the maximum neighbourhood size for which the algorithm is expected to 
find a better solution before giving up), then the appropriate values for $\tau$ 
become obvious. In this sense, the algorithm using ageing works out “by itself” 
when it is stuck on a local optimum. 
}
We will show our result for the more restricted case of $\mu=1$ which suffices for the purpose of showing that ageing is efficient for the problem. 
In particular, the proof is considerably simplified with $\mu=1$ since we can use the success probability proven in~\cite{Witt2005} without any modification.
Furthermore, without proof, the upper bound on the runtime for 
arbitrary $\mu$ increases exponentially with respect to $\mu/\epsilon$. Our proof idea  is to use the 
success probability of the simple \oneoneea
without ageing and show that ageing automatically causes restarts whenever the  
\muoneeaageing fails to find the approximation {\color{black}by using only 1-bit 
flips}. Hence, a problem specific restart strategy is not necessary  to achieve the desired 
approximation. 
\begin{restatable}{theorem}{thmapproxage} \label{thm:approxage}

The \oneoneeaageing with $\tau=\Omega(n^{1+c})$ for any arbitrarily small 
constant $c>0$ finds a $(1+\epsilon)$ approximation to any instance of 
\partition in at most\\
$(2e n^2+\tau) 2^{(e \log{e}+e)\lceil 2/\epsilon \rceil \ln(4/\epsilon)+\lceil 
4/\epsilon \rceil -1}(1+o(1))$ fitness function evaluations in expectation for 
any $\epsilon \geq 4/n$. 
% This runtime is in the order $O(n^2+\tau)$ for any $\epsilon=\Omega(1)$.
\end{restatable}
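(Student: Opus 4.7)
The plan is to show that with $\mu=1$ the $(1+1)$~EA$^{ageing}$ behaves exactly like a standard $(1+1)$~EA for up to $\tau$ consecutive generations without strict improvement, and that whenever ageing triggers a reinitialisation the algorithm effectively restarts from a uniformly random solution. This lets us treat the algorithm as performing independent i.i.d.\ runs separated by ageing-induced restarts, and combine (a) a per-phase success probability inherited from Witt's single-run analysis of the $(1+1)$~EA on \partition with (b) a phase-length bound derived from our Lemma~\ref{lem:restart}.

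First I would define a \emph{phase} as the interval between two successive reinitialisations (or from initialisation to the first reinitialisation), stopping as soon as a $(1+\epsilon)$-approximation is sampled. Because each phase starts from a freshly drawn uniform bit-string, the phases are i.i.d. Denoting the per-phase success probability by $p$ and an upper bound on the expected phase duration by $T_{\mathrm{ph}}$, a standard waiting-time argument yields $E[\text{runtime}] \leq T_{\mathrm{ph}}/p$.

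For $T_{\mathrm{ph}}$ I would apply Lemma~\ref{lem:restart} with two small adjustments: the stopping event ``finds the optimum'' is replaced by ``finds a $(1+\epsilon)$-approximation'', and correspondingly the parameter $L$ becomes the number of distinct makespan values of local optima that are \emph{not} $(1+\epsilon)$-approximations, which by Property~\ref{prop:approx}.4 is at most $2^{2/\epsilon}$. The proof of Lemma~\ref{lem:restart} then applies verbatim with this modified $L$, yielding $T_{\mathrm{ph}} \leq 2^{2/\epsilon}(2en^2+\tau)(1+o(1))$. For $p$ I would transport Witt's lower bound~\cite{Witt2005} on the probability that a standard $(1+1)$~EA finds a $(1+\epsilon)$-approximation within a single run of length $O(n\ln(1/\epsilon))$, namely $p \geq 2^{-(e\log e+e)\lceil 2/\epsilon\rceil \ln(4/\epsilon)-\lceil 4/\epsilon\rceil+1+2/\epsilon}$. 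The key observation enabling this transport is that $\tau=\Omega(n^{1+c})$ together with $\epsilon\geq 4/n$ forces $\tau=\omega(n\ln(1/\epsilon))$, so during the first $O(n\ln(1/\epsilon))$ generations of any phase the ageing operator cannot trigger, and the dynamics coincide with those of a standard $(1+1)$~EA initialised uniformly at random.

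Multiplying $T_{\mathrm{ph}}$ by $1/p$ and simplifying the exponents will yield exactly the claimed bound $(2en^2+\tau)\, 2^{(e\log e+e)\lceil 2/\epsilon\rceil \ln(4/\epsilon) + \lceil 4/\epsilon\rceil - 1}(1+o(1))$. The main technical subtlety will lie in the coupling step of the previous paragraph: we need to argue that extending a Witt-style run beyond $O(n\ln(1/\epsilon))$ generations whenever strict improvements keep resetting the age can only increase the per-phase success probability, and that the starting configuration of every phase is exactly uniformly distributed so that successive phases are genuinely independent. Both facts are straightforward once spelt out, but they are the points where Witt's classical single-run analysis must be \emph{grafted} into the ageing framework.
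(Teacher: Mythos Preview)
Your proposal is correct and follows essentially the same route as the paper: invoke Witt's single-run success probability (valid since $\tau=\Omega(n^{1+c})\gg en\ln(4/\epsilon)$ so ageing cannot interfere during the relevant window), combine it with Lemma~\ref{lem:restart} specialised via Property~\ref{prop:approx}.4 to bound the expected time until either a $(1+\epsilon)$-approximation is found or a reinitialisation occurs, and multiply. The paper's proof is terser but uses exactly these three ingredients; your added remarks on the i.i.d.\ structure of phases and the coupling with a standard $(1+1)$~EA simply make explicit what the paper leaves implicit. One minor caution: the precise exponent you quote for Witt's $p_{success}$ differs slightly from the form the paper cites ($2^{-(e\log e+e)\lceil 2/\epsilon\rceil\ln(4/\epsilon)-\lceil 2/\epsilon\rceil}$), so double-check the constant bookkeeping when you combine it with the $2^{2/\epsilon}$ (or $2^{s}$) factor from Property~\ref{prop:approx}.4 to land on the stated exponent $\lceil 4/\epsilon\rceil-1$.
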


\begin{proof}
We will start our proof by referring to Theorem~3 in~\cite{Witt2005} which lower 
bounds the probability that the \oneoneeaageing finds the $(1+\epsilon)$ 
approximation in $\lceil e n \log{4/\epsilon}\rceil$ iterations by $
p_{success}:=2^{-(e \log{e}+e)\lceil 2/\epsilon \rceil \ln(4/\epsilon) - \lceil 
2/\epsilon \rceil} $
for any $\epsilon \geq 4/n$.  We note here that since $\tau>\lceil e n 
\log{4/\epsilon}\rceil$, the ageing operator does not affect this probability.

Since we do not distinguish between an optimal solution and a $(1+\epsilon)$ 
approximation, Lemma~\ref{lem:restart}  and Property~\ref{prop:approx}.4 
imply that in at most $2^{s}(2e n^2+\tau)(1+o(1))$ expected time  either an 
approximation is found or a reinitialisation occurs. 
% bound the expected number 
% of iterations before ageing causes a 
% restart given that the $(1+\epsilon)$ approximation is not found. Multiplying 
% this bound with $p_{success}^{-1}$ will yield our expected time. We will use the 
% same  partitioning of the solution space into subspaces $A_i$ 
% for $i \in [2^s]$ as in the proof of Theorem~\ref{thm:approxhyp}. Following the 
% arguments from Theorem~\ref{thm:approxhyp}, for any solution in $A_i$,   it 
% takes at most $e n^2$ iterations in expectation to sample $x_{i}^{*}$. In the 
% worst case, it spends $\tau$ iterations at $x_{i}^{*}$ before improving at the 
% last step. In the worst case this happens in all $2^s$ phases, and the 
% \oneoneeaageing gets stuck at $x_{2^s -1}^{*}$ before the ageing causes 
% a restart. This yields an expected time less than $2^{s}(e n^2+\tau)$. 
Multiplying with $p_{success}^{-1}$ we obtain $(2e n^2+\tau) 2^{(e 
\log{e}+e)\lceil 2/\epsilon \rceil \ln(4/\epsilon) +\lceil 4/\epsilon \rceil 
-1}(1+o(1))$.
% since $s=\lceil 2/\epsilon \rceil -1$. 
% For any constant $\epsilon$, this runtime is in the order of $O(n^2+\tau)$. 
\end{proof}

\section{Conclusion}
To the best of our knowledge this is the first time that polynomial expected 
runtime guarantees of solution quality have been provided concerning AISs for a 
classical combinatorial optimisation problem. We presented a class of instances 
of \partition to illustrate how hypermutations and ageing can efficiently escape 
from local optima and provide optimal solutions where the standard bit mutations 
used by EAs get stuck for 
exponential time. Then we showed how this capability allows the AIS to achieve 
arbitrarily good $(1+\epsilon)$ approximations for any instance of \partition in 
{\color{black} expected time polynomial in the number of jobs $n$ and 
exponential only in $1/\epsilon$, i.e., in expected } polynomial time for any constant~$\epsilon$. In 
contrast to standard EAs and 
RLS that require {\color{black} problem specific number of } parallel runs or restart schemes to achieve such 
approximations, the AISs find them in a single run. The result is achieved in 
different ways. The ageing operator locates more promising basins of attraction 
by restarting the optimisation process after implicitly detecting 
 that it has found a local optimum. Hypermutations find improved 
approximate solutions efficiently by performing large jumps in the search space. 
Naturally, the proof strategy would also apply to the complete standard 
Opt-IA~AIS~\cite{CutelloTEVC,CorusOlivetoYazdani2017}  if the ageing parameter 
is set large enough, i.e., $\tau = \Omega(n^{1+c})$ for any arbitrarily small constant $c$. % 
\section*{References}
\bibliography{mybib2}

\end{document}